\newcommand{\arxiv}[1]{\iftoggle{colt}{}{#1}}
\newcommand{\colt}[1]{\iftoggle{colt}{#1}{}}
\global\togglefalse{colt}
\newcommand{\neutralize}[1]{\expandafter\let\csname c@#1\endcsname\count@}
\declaretheorem[name=Theorem,parent=section]{theorem}
\declaretheorem[name=Lemma,parent=section]{lemma}
\declaretheorem[name=Assumption, parent=section]{assumption}
\declaretheorem[name=Condition, parent=section]{condition}
\declaretheorem[name=Proposition, parent=section]{proposition}
\declaretheorem[name=Corollary, parent=section]{corollary}
  \renewenvironment{proof}[1][Proof]%
  {%
   \par\noindent{\bfseries\upshape {#1.}\ }%
  }%
  {\qed\newline}
\theoremstyle{definition}  %
\theoremstyle{plain}
\newtheorem{definition}{Definition}[section]
\xpatchcmd{\proof}{\itshape}{\normalfont\proofnameformat}{}{}
\newcommand{\proofnameformat}{\bfseries}
\newcommand{\pref}[1]{\cref{#1}}
\newcommand{\pfref}[1]{Proof of \pref{#1}}
\renewcommand{\eqref}[1]{\texorpdfstring{\hyperref[#1]{(\ref*{#1})}}{(\ref*{#1})}}
\Crefname{assumption}{Assumption}{Assumptions}
\Crefname{subsubsection}{Section}{Sections}
    \let\Cref\crtCref
    \let\cref\crtcref
\DeclareDocumentCommand{\XDeclarePairedDelimiter}{mm}
 {
  \__egreg_delimiter_clear_keys: %
  \keys_set:nn { egreg/delimiters } { #2 }
  \use:x %
   {
    \exp_not:n {\NewDocumentCommand{#1}{sO{}m} }
     {
      \exp_not:n { \IfBooleanTF{##1} }
       {
        \exp_not:N \egreg_paired_delimiter_expand:nnnn
         { \exp_not:V \l_egreg_delimiter_left_tl }
         { \exp_not:V \l_egreg_delimiter_right_tl }
         { \exp_not:n { ##3 } }
         { \exp_not:V \l_egreg_delimiter_subscript_tl }
       }
       {
        \exp_not:N \egreg_paired_delimiter_fixed:nnnnn 
         { \exp_not:n { ##2 } }
         { \exp_not:V \l_egreg_delimiter_left_tl }
         { \exp_not:V \l_egreg_delimiter_right_tl }
         { \exp_not:n { ##3 } }
         { \exp_not:V \l_egreg_delimiter_subscript_tl }
       }
     }
   }
 }
\XDeclarePairedDelimiter{\supnorm}{
  left=\lVert,
  right=\rVert,
  subscript=\infty
  }
\DeclarePairedDelimiter{\abs}{\lvert}{\rvert} %
\DeclarePairedDelimiter{\brk}{[}{]}
\DeclarePairedDelimiter{\crl}{\{}{\}}
\DeclarePairedDelimiter{\prn}{(}{)}
\DeclarePairedDelimiter{\nrm}{\|}{\|}
\let\Pr\undefined
\DeclareMathOperator{\Pr}{Pr}
\DeclareMathOperator*{\argmin}{arg\,min} %
\DeclareMathOperator*{\argmax}{arg\,max}
\def\ddefloop#1{\ifx\ddefloop#1\else\ddef{#1}\expandafter\ddefloop\fi}
\def\ddef#1{\expandafter\def\csname bb#1\endcsname{\ensuremath{\mathbb{#1}}}}
\def\ddefloop#1{\ifx\ddefloop#1\else\ddef{#1}\expandafter\ddefloop\fi}
\def\ddef#1{\expandafter\def\csname b#1\endcsname{\ensuremath{\mathbf{#1}}}}
\def\ddef#1{\expandafter\def\csname sf#1\endcsname{\ensuremath{\mathsf{#1}}}}
\def\ddef#1{\expandafter\def\csname c#1\endcsname{\ensuremath{\mathcal{#1}}}}
\def\ddef#1{\expandafter\def\csname h#1\endcsname{\ensuremath{\widehat{#1}}}}
\def\ddef#1{\expandafter\def\csname hc#1\endcsname{\ensuremath{\widehat{\mathcal{#1}}}}}
\def\ddef#1{\expandafter\def\csname t#1\endcsname{\ensuremath{\widetilde{#1}}}}
\def\ddef#1{\expandafter\def\csname tc#1\endcsname{\ensuremath{\widetilde{\mathcal{#1}}}}}
\def\ddefloop#1{\ifx\ddefloop#1\else\ddef{#1}\expandafter\ddefloop\fi}
\def\ddef#1{\expandafter\def\csname scr#1\endcsname{\ensuremath{\mathscr{#1}}}}
\newcommand{\ind}{\mathbbm{1}}    %
\newcommand{\veps}{\varepsilon}
\newcommand{\Var}{\mathrm{Var}}
\newcommand{\poly}{\mathrm{poly}}
\renewcommand{\index}[1]{^{\scriptscriptstyle(#1)}} %
\newcommand{\dtest}{d_{\mathrm{test}}}
\newcommand{\dtrain}{d_{\mathrm{train}}}
\newcommand{\Dtrain}{\cD_{\mathrm{train}}}
\newcommand{\Dtest}{\cD_{\mathrm{test}}}
\newcommand{\Ptrain}{\bbP_{\mathrm{train}}}
\newcommand{\Ptest}{\bbP_{\mathrm{test}}}
\newcommand{\Etrain}{\bbE_{\mathrm{train}}}
\newcommand{\Etest}{\bbE_{\mathrm{test}}}
\newcommand{\Rtrain}{R_{\mathrm{train}}}
\newcommand{\Rhat}{\widehat{R}}
\newcommand{\Rtest}{R_{\mathrm{test}}}
\newcommand{\fstar}{f^{\star}}
\newcommand{\fbar}{\bar{f}}
\newcommand{\fhat}{\hat{f}}
\newcommand{\ferm}{\fhat^{(n)}_{\normalfont \erm}}
\newcommand{\ferminf}{\fhat^{(\infty)}_{\normalfont \erm}}
\newcommand{\finf}{\fhat^{(n)}_{\infty}}
\newcommand{\fdro}{\fhat^{(n)}_{\normalfont \dbr}}
\newcommand{\fbad}{f_{\mathrm{bad}}}
\newcommand{\ftrg}{f_{\mathrm{trg}}}
\newcommand{\apx}{\mathsf{apx}}
\newcommand{\misspec}{\veps_{\infty}}
\newcommand{\epsstat}{\veps_{\mathrm{stat}}}
\newcommand{\epsslow}{\veps_{\mathrm{slow}}}
\newcommand{\Cinf}{C_{\infty}^{\vphantom{k}}}
\newcommand{\cLhat}{\widehat{\cL}}
\newcommand{\Ccov}{C_{\mathrm{cov}}}
\newcommand{\Cconc}{C_{\mathrm{conc}}}
\newcommand{\Ctrans}{C_{\mathrm{transfer}}}
\newcommand{\pit}{\pi\index{t}}
\newcommand{\piht}{\pi_h\index{t}}
\newcommand{\reg}{\mathrm{Reg}}
\newcommand{\err}{\mathrm{err}}
\newcommand{\golf}{{\normalfont \textsf{GOLF}}\xspace}
\newcommand{\golfdbr}{{\normalfont \textsf{GOLF.DBR}}\xspace}
\newcommand{\dbr}{{\normalfont \textsf{DBR}}\xspace}
\newcommand{\erm}{{\normalfont \textsf{ERM}}\xspace}
\newcommand{\leqlab}[1]{\mathmakebox[1.35em]{\buildrel #1 \over \leq}}
\let\OldStatex\Statex
\renewcommand{\Statex}[1][3]{%
  \setlength\@tempdima{\algorithmicindent}%
  \OldStatex\hskip\dimexpr#1\@tempdima\relax}
\let\oldparagraph\paragraph
\renewcommand{\paragraph}[1]{\oldparagraph{#1.}}
\newcommand{\tightparagraph}{%
  \@startsection{paragraph}{4}%
  {\z@}{1.25ex \@plus 1ex \@minus .2ex}{-1em}%
  {\normalfont\normalsize\bfseries}%
}
  \title{Mitigating Covariate Shift in Misspecified Regression \\ with Applications to Reinforcement Learning\footnotetext{Authors listed in alphabetical order.}}
\author{%
Philip Amortila \\
{\normalsize University of Illinois, Urbana-Champaign}\\
{\small\texttt{philipa4@illinois.edu}}
\and
Tongyi Cao \\
{\normalsize University of Massachusetts, Amherst}\\
{\small\texttt{tcao@cs.umass.edu}}
\and
Akshay Krishnamurthy\\
{\normalsize Microsoft Research, NYC}\\
{\small\texttt{akshaykr@microsoft.com}}
}
\date{}
\begin{document}

\maketitle

\begin{abstract}

A pervasive phenomenon in machine learning applications is
\emph{distribution shift}\paedit{,} where training and deployment conditions for
a machine learning model differ. As distribution shift typically
results in a degradation in performance, much attention has been
devoted to algorithmic interventions that mitigate these detrimental
effects. In this paper, we study 
the effect of distribution shift in the presence of model
misspecification, specifically focusing on
$L_{\infty}$-misspecified regression and \emph{adversarial covariate shift}, where the
regression target remains fixed while the covariate distribution
changes arbitrarily. 
We show that empirical risk minimization, or
standard least squares regression, can result in
undesirable \emph{misspecification amplification} where the
error due to misspecification is amplified by the density ratio between the
training and testing distributions. As our main result, we develop a
new algorithm---inspired by robust optimization techniques---that
avoids this undesirable behavior, resulting in no misspecification
amplification while still obtaining optimal statistical rates. As
applications, we use this regression procedure to obtain new
guarantees in offline and online reinforcement learning with
misspecification and establish new separations between previously studied
structural conditions and notions of coverage.

\end{abstract}

\section{Introduction}
\label{sec:intro}
A majority of machine learning methods are developed and analyzed under the idealized setting where the training conditions accurately reflect those at deployment. 
Yet, almost all practical applications exhibit \emph{distribution shift}, where these conditions differ significantly. 
Distribution shift can occur for a plethora of reasons, ranging from quirks in data collection~\citep{recht2019imagenet}, to temporal drift~\citep{gama2014survey,besbes2015non}, to users adapting to an ML model~\citep{perdomo2020performative}, and it typically results in a degradation in model performance. 
Due to the prevalence of this phenomenon and the diversity of applications where it manifests, %
there is a vast and ever-growing body of literature studying algorithmic interventions to mitigate distribution shift~\citep{quinonero2008dataset,sugiyama2012machine}.

\emph{Covariate shift}
is perhaps the most basic form of distribution shift. %
Covariate shift is pertinent to supervised learning---where the goal is to predict a label $Y$ from covariates $X$---and posits a change in the distribution over covariates while keeping the target predictor fixed. 
This setup, in particular that the target does not change, is natural in applications including neural algorithmic reasoning~\citep{anil2022exploring,zhang2022unveiling,liu2023exposing}, reinforcement learning~\citep{ross2011reduction,levine2020offline}, and computer vision~\citep{koh2021wilds,recht2019imagenet,miller2021accuracy}. 
It is well known that one can adapt guarantees from statistical learning to the covariate shift setting; specifically, for well-specified regression, a classical density-ratio argument shows that empirical risk minimization (ERM) is consistent under suitably well-behaved covariate shifts.

One stipulation of this consistency guarantee is that the model/hypothesis class be \textit{well-specified} (also referred to as \textit{realizable}).  
Although statistical learning theory offers a rather complete understanding of misspecification in the absence of covariate shift (via agnostic learning and excess risk bounds),
our understanding of how covariate shift can adversely interact with model misspecification remains fairly immature. 
This interaction is the focus of the present paper.

\subsection{Contributions}
We study regression under \emph{adversarial covariate shift} where we receive regression samples from a distribution $\Dtrain$ but are evaluated on an arbitrary distribution $\Dtest$ for which no prior knowledge is available; we only assume that the distributions share the same target regression function $\fstar$ and that the worst-case density ratio of the covariate marginals is bounded by $ \Cinf\in[1,\infty)$ (formally defined in \cref{sec:regression}). %
As inductive bias, we have a function class $\cF$ of predictors and assume \emph{$L_{\infty}$-misspecification}: there exists a predictor $\fbar\in\cF$ that is pointwise close to $\fstar$, i.e., $\|\fbar - \fstar \|_{\infty} \leq \misspec$. 
This notion is natural for the covariate shift setting because it ensures that $\fbar$ has low and comparable prediction error on both $\Dtrain$ and any $\Dtest$. %

In this setup we obtain the following results:
\begin{enumerate}
    \item We show that standard empirical risk minimization (\erm) is not robust to covariate shift in the presence of misspecification. 
    Precisely, even in the limit of infinite data, \erm over $\cF$ can incur squared prediction error under $\Dtest$ scaling as $\Omega(\Cinf \misspec^2)$. Meanwhile
    the error of the $L_{\infty}$-misspecified predictor $\fbar$ is at most $\misspec^2$. 
    We call this phenomenon---where the misspecification error is scaled by the density ratio coefficient (despite there being a predictor avoiding this scaling)---\emph{misspecification amplification}.
    
    \item As our main result, we give a new algorithm, called disagreement-based regression (\dbr), that avoids \emph{misspecification amplification} and is therefore robust to adversarial covariate shift under misspecification.
    \dbr has asymptotic prediction error under $\Dtest$ scaling as $O(\misspec^2)$, with no dependence on the density ratio coefficient $\Cinf$. 
    At the same time, it has order-optimal finite sample behavior recovering standard ``fast rate'' guarantees for the well-specified setting, and can be extended to adapt to unknown misspecification level (as shown in~\pref{app:extensions}). 
    To our knowledge, this is the first result avoiding misspecification amplification in the adversarial covariate shift setting. 
    Our assumptions---particularly that no information about $\Dtest$ is available and that $\cF$ is unstructured---rule out prior approaches based on density ratios~\citep{shimodaira2000improving,duchi2021learning} or sup-norm convergence~\citep{schmidt2022local}; see~\pref{sec:related} for further discussion.

\end{enumerate}

To demonstrate the utility of disagreement-based regression, we deploy the procedure in value function approximation settings in reinforcement learning (RL), where regression is a standard primitive and mitigating the adverse effects of distribution shift is a central challenge. 
Here, using \dbr as a drop-in replacement for \erm when fitting Bellman backups, we obtain the following results:
\begin{enumerate}
    \item In the offline RL setting, we instantiate the minimax algorithm of~\citet{chen2019information} with \dbr and show that, under $L_{\infty}$-misspecification and with coverage measured via the concentrability coefficient, misspecification amplification can be avoided when learning a near optimal policy. 
    In contrast, prior lower bounds imply that misspecification amplification is unavoidable when coverage is measured via Bellman transfer coefficients~\citep{du2019good,van2019comments,lattimore2020learning}.
    Our result therefore establishes a new separation between concentrability and Bellman transfer coefficients. 
    \item In the online RL setting, we instantiate the \golf algorithm of~\citet{jin2021bellman} with \dbr and obtain analogous results under the structural condition of \emph{coverability} (building on the analysis of~\citet{xie2022role}).
    Taken with the above lower bounds~\citep{du2019good,van2019comments,lattimore2020learning}, %
this separates structural conditions involving Bellman errors (e.g., Bellman rank~\citep{jiang2017contextual}, Bellman-eluder dimension~\citep{jin2021bellman}, or sequential extrapolation coefficient~\citep{xie2022role}) from coverability, which does not.
\end{enumerate}

To keep the presentation concise and focused on the interaction between covariate shift and misspecification, we focus on the simplest settings that manifest misspecification amplification. 
In~\pref{sec:discussion}, we discuss a number of directions for future work, which include extensions to the core technical and algorithmic results.

\section{Misspecified regression under distribution shift}
\label{sec:regression}
We begin by introducing the formal problem setting and our assumptions. 
Most proofs for results in this section are deferred to \cref{app:regression}. %
There are two joint distributions, called $\Dtrain$ and $\Dtest$, over $\cX\times\bbR$ where $\cX$ is a covariate space. %
We use $\Ptrain,\Ptest$ and $\Etrain,\Etest$ to denote the probability law and expectation under these distributions. 
We hypothesize that $\Dtrain$ and $\Dtest$ share the same \emph{Bayes regression function}, an assumption referred to as covariate shift in the literature~\citep{shimodaira2000improving}.
\begin{assumption}[Covariate shift]
\label{assum:covariate_shift}
For all $x \in \cX$ we have 
\begin{align*}
\Etrain[y \mid x] = \Etest[y \mid x].
\end{align*}
\end{assumption}
Let $\fstar: x \mapsto \Etrain[y \mid x]$ denote the shared Bayes regression function.
We posit that the marginal distributions over $\cX$ are absolutely continuous with respect to a reference measure and use $\dtrain$ and $\dtest$ to denote the corresponding marginal densities.
We assume these are related via the following density ratio assumption.
\begin{assumption}[Bounded density ratios]
\label{assum:density_ratios}
The density ratio 
\[
\Cinf := \sup_{x\in\cX} \abs*{ \frac{\dtest(x)}{\dtrain(x)} }
\]
is bounded, i.e., $\Cinf < \infty$.
\end{assumption}
Note that $\Cinf \geq 1$ always. 
Boundedness of density ratios is standard in the covariate shift literature; indeed the coefficient $\Cinf$ appears in the classical covariate shift analyses as well as in many algorithmic interventions~\citep{shimodaira2000improving,sugiyama2007direct}.
Beyond satisfying these assumption, $\Dtest$ can be adaptively and adversarially chosen. 
In particular, no information about $\Dtest$, such as labeled/unlabeled samples or other inductive bias, is available. 

We have a dataset $\{(x_i,y_i)\}_{i=1}^n$ of $n$ i.i.d. labeled examples sampled from $\Dtrain$ and a function class $\cF \subset (\cX \to \bbR)$ of predictors. We define the (squared) \emph{prediction errors}
\begin{align}
\Rtrain(f): = \Etrain\brk*{ (f(x) - \fstar(x))^2 }, \quad \mathrm{and} \quad  \Rtest(f) := \Etest\brk*{ (f(x) - \fstar(x))^2 }. \label{eq:test_obj}
\end{align}
We seek to use the dataset to find a predictor $\fhat$ for which $\Rtest(\fhat)$ is small.

Regarding $\cF$, we make two assumptions: we assume that $|\cF|<\infty$ and that $\cF$ is $L_{\infty}$-misspecified.
\begin{assumption}[$L_{\infty}$-misspecification]
\label{assum:misspecification}
For some $\misspec \geq 0$, there exists $\fbar \in \cF$ with 
\begin{align*}
\nrm*{\fbar - \fstar}_{\infty} \leq \misspec, \quad \mathrm{where} \quad \nrm*{f}_{\infty} := \sup_{x\in\cX} | f(x) |.
\end{align*}
\end{assumption}
Most prior analyses for regression under covariate shift assume that the model class $\cF$ is well-specified, i.e., that $\veps_{\infty}=0$ so that $\fstar \in \cF$. 
$L_{\infty}$-misspecification provides a relaxation that is natural for at least two reasons.
First, it enables end-to-end learning guarantees via composition with approximation-theoretic results for specific function classes (e.g., neural networks), where it is standard to measure approximation via the $L_{\infty}$ norm~\citep{mjt_dlt}.
More importantly, $L_{\infty}$-misspecification is particularly apt in the covariate shift setting because it ensures that $\fbar$ has low prediction error on both $\Dtest$ and $\Dtrain$. 
Thus, there is at least one high-quality predictor whose performance is stable across distributions.
In contrast, we have no such guarantee if we, for example, measure misspecification with respect to other norms (which depend on the distribution) or consider the agnostic setting (with no quantified misspecification assumption).
Indeed, we will see below that misspecification amplification is unavoidable in such cases.

We also make the following technical assumption.
\begin{assumption}[Boundedness]
\label{assum:boundedness}
$\sup_{f \in \cF} \nrm*{f}_{\infty} \leq 1$ and $\abs*{y} \leq 1$ almost surely under $\Dtrain$ and $\Dtest$. 
\end{assumption}
We impose~\pref{assum:boundedness} and that $|\cF| < \infty$ solely to highlight the novel algorithmic and technical aspects; we expect that relaxing these assumptions is possible.

\subsection{Misspecification amplification for empirical risk minimization}\label{sec:erm-amplification}
When there is no prior knowledge about or data from $\Dtest$, perhaps the most natural algorithm for optimizing $\Rtest(\cdot)$ is empirical risk minimization (\erm) on the data from the training distribution:
\begin{align*}
\ferm := \argmin_{f \in \cF} \frac{1}{n}\sum_{i=1}^n (f(x_i) - y_i)^2. 
\end{align*}

A standard uniform convergence argument yields the classical covariate shift guarantee for \erm:
\begin{restatable}[\erm upper bound]{proposition}{ermub}
\label{prop:erm_ub}
For any $\delta \in (0,1)$ with probability at least $1-\delta$,  \erm satisfies
\begin{align*}
\Rtest(\ferm) \leq O\prn*{\Cinf \misspec^{2} +  \Cinf \frac{ \log (|\cF|/\delta)}{n}}.%
\end{align*}
\end{restatable}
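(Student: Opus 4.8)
The plan is to first reduce the test-error bound to a training-error bound via the density ratio, and then to establish a fast-rate guarantee for $\Rtrain(\ferm)$ using a standard localized (Bernstein-type) uniform convergence argument over the finite class $\cF$.

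\emph{Step 1 (density-ratio reduction).} First I would write the test risk of any $f$ as an integral against $\dtest$ and bound $\dtest(x) \le \Cinf\,\dtrain(x)$ pointwise by \pref{assum:density_ratios}; combined with \pref{assum:covariate_shift}, which guarantees that $\fstar$ is the Bayes function shared by both distributions, this yields $\Rtest(f) \le \Cinf\,\Rtrain(f)$ for every $f$. It therefore suffices to show $\Rtrain(\ferm) \le O(\misspec^2 + \log(|\cF|/\delta)/n)$ and then multiply through by $\Cinf$. This step is the classical covariate-shift argument and is routine.

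\emph{Step 2 (excess-loss variable and variance control).} For each $f \in \cF$ I would introduce the excess-loss variable $g_f(x,y) := (f(x)-y)^2 - (\fstar(x)-y)^2$, which is well-defined even though $\fstar$ need not lie in $\cF$. Writing $\Delta := f(x)-\fstar(x)$ and using $\Etrain[y \mid x] = \fstar(x)$, the noise cross term is conditionally mean-zero, so $\Etrain[g_f] = \Etrain[\Delta^2] = \Rtrain(f)$. The key fast-rate ingredient is a variance–expectation relationship: since $g_f = \Delta^2 + 2\Delta(\fstar - y)$ and all quantities are bounded by \pref{assum:boundedness}, one gets $\Vtrain[g_f] \le \Etrain[g_f^2] \le C\,\Etrain[\Delta^2] = C\,\Rtrain(f)$ for an absolute constant $C$, together with $|g_f| \le 4$ almost surely, so that Bernstein's inequality is applicable.

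\emph{Step 3 (localized concentration and \erm optimality).} Applying Bernstein together with a union bound over the $|\cF|$ functions gives, with probability at least $1-\delta$ and simultaneously for all $f \in \cF$, a lower-tail deviation $\Rtrain(f) - \widehat{g}_f \lesssim \sqrt{\Vtrain[g_f]\,\log(|\cF|/\delta)/n} + \log(|\cF|/\delta)/n$, where $\widehat{g}_f$ is the empirical average of $g_f$. Substituting the variance bound from Step 2 and applying AM--GM to absorb the square-root term into $\tfrac12\Rtrain(f)$ yields the localized bound $\Rtrain(f) \le 2\widehat{g}_f + O(\log(|\cF|/\delta)/n)$ for all $f$. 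I would then invoke optimality of \erm: subtracting $\tfrac1n\sum_i (\fstar(x_i)-y_i)^2$ from $\Rtrainhat(\ferm) \le \Rtrainhat(\fbar)$ gives $\widehat{g}_{\ferm} \le \widehat{g}_{\fbar}$. Finally, since $\fbar \in \cF$ with $\Rtrain(\fbar) \le \misspec^2$ by \pref{assum:misspecification}, the matching upper-tail Bernstein bound (covered by the same union bound) and one more AM--GM step give $\widehat{g}_{\fbar} \le O(\misspec^2 + \log(|\cF|/\delta)/n)$. Chaining, $\Rtrain(\ferm) \le 2\widehat{g}_{\ferm} + O(\cdots) \le 2\widehat{g}_{\fbar} + O(\cdots) \le O(\misspec^2 + \log(|\cF|/\delta)/n)$, and Step 1 completes the proof.

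\emph{Main obstacle.} The density-ratio reduction and the union bound over the finite class are mechanical. The hard part will be the fast-rate localization: one must verify the variance–expectation relationship $\Vtrain[g_f] \lesssim \Rtrain(f)$, which relies critically on $\fstar$ being the conditional mean so that the noise cross term is mean-zero and the variance is driven by $\Rtrain(f)$ itself, and then carefully execute the Bernstein-plus-AM--GM absorption that converts the slow $1/\sqrt{n}$ rate into the fast $1/n$ rate while keeping the misspecification contribution at $\misspec^2$ (its amplification by $\Cinf$ occurring only in the final density-ratio step).
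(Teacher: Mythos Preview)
Your proposal is correct and follows essentially the same approach as the paper: a Bernstein-based fast-rate concentration argument over the finite class, combined with the density-ratio reduction. The only minor technical difference is that you center the excess loss at $\fstar$ (obtaining the clean bound $\Var[g_f]\lesssim \Rtrain(f)$ but then needing a separate upper-tail bound for $\widehat{g}_{\fbar}$), whereas the paper centers at $\fbar\in\cF$, which yields $\Var\lesssim (R(f)-R(\fbar))+\misspec^2$ and lets one use $\Rhat(\ferm)\le\Rhat(\fbar)$ directly without an additional upper-tail step.
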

The second term which scales as $1/n$---the statistical term---is optimal in the generality of our setup~\citep{ma2023optimally,ge2023maximum}, the interpretation being that the effective sample size is reduced by a factor of $\Cinf$ due to the mismatch between $\Dtrain$ and $\Dtest$. %
The first term---the misspecification term---represents the asymptotic\footnote{We consider the asymptotic regime where $n \to \infty$ with all other quantities, like $\log |\cF|$ and $\misspec$, fixed.} test error of \erm and demonstrates a phenomenon that we call \emph{misspecification amplification}, whereby the error due to misspecification is amplified by the density ratio coefficient. 
This phenomenon is simultaneously more concerning and less intuitive than the degradation of the statistical term, because it describes an error which does not decay with larger sample sizes and because $\fbar \in \cF$ has $\Rtest(\fbar) = \misspec^2$.
Since $\cF$ contains a predictor that does not incur misspecification amplification, one might hope that misspecification amplification can be avoided. 

Our first main result is that misspecification amplification \emph{cannot} be avoided by \erm in the worst case. 
The result is proved in the asymptotic regime, where \erm is equivalent to the $L_2(\Dtrain)$-projection of $\fstar$ onto the function class $\cF$, defined as
\begin{align*}
	\ferminf \in \argmin_{f \in \cF} \| f - \fstar\|_{L_2(\Dtrain)}^2, \qquad \mathrm{ with } \qquad \| g \|_{L_2(\Dtrain)}^2 := \Etrain\brk*{ g(x)^2 }.
\end{align*}
The next proposition shows that $\ferminf$ can incur misspecification amplification.
\begin{restatable}[\erm lower bound]{proposition}{ermlb}
\label{prop:erm_lb}
For all $\misspec \in (0,1)$ and $\Cinf\in[1,\infty)$ such that $\sqrt{\Cinf}\cdot\misspec \leq 1/2$, and for all $\zeta > 0$        sufficiently small,
there exist distributions $\Dtrain,\Dtest$ and a function class $\cF$ with $|\cF| = 2$ satisfying~\pref{assum:covariate_shift}-\pref{assum:boundedness} (with parameters $\misspec,\Cinf$) such that
\begin{align*}
	\Rtest(\ferminf) = \Cinf \misspec^2 - \zeta.
\end{align*}
\end{restatable}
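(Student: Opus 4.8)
The plan is to exhibit an explicit two-point instance. Since the proposition concerns the population projection $\ferminf$ rather than finite-sample \erm, I work directly with the $L_2(\Dtrain)$ objective $\Rtrain(\cdot)$ and engineer a class $\cF = \{\fbar,\fbad\}$ in which $\fbad$ narrowly beats $\fbar$ on $\Dtrain$ (so the projection selects it) yet pays a factor $\Cinf$ more on $\Dtest$. The mechanism I will rely on is that placing all of $\fbad$'s error at a single covariate whose test-to-train density ratio is maximal makes its test error exactly $\Cinf$ times its train error, whereas a constant competitor $\fbar$ incurs identical error on both distributions.

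Concretely, I would take $\cX = \{1,2\}$ with a deterministic label $y \equiv 0$ under both distributions, so that $\fstar \equiv 0$ and \pref{assum:covariate_shift} holds trivially. I set the marginal masses $\dtrain(1) = \misspec^2$, $\dtrain(2) = 1-\misspec^2$ and $\dtest(1) = \Cinf\misspec^2$, $\dtest(2) = 1-\Cinf\misspec^2$. The hypothesis $\sqrt{\Cinf}\misspec \le 1/2$ gives $\Cinf\misspec^2 \le 1/4$, which guarantees these are valid probability distributions; the density ratio equals $\dtest(1)/\dtrain(1) = \Cinf$ at point $1$ and $(1-\Cinf\misspec^2)/(1-\misspec^2) \le 1$ at point $2$, so its supremum is exactly $\Cinf$ and \pref{assum:density_ratios} holds. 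For the predictors I take $\fbar \equiv \misspec$, so that $\|\fbar - \fstar\|_\infty = \misspec$ and \pref{assum:misspecification} holds with equality, and $\fbad(1) = v$, $\fbad(2) = 0$ with $v \ldef \sqrt{1 - \frac{\zeta}{\Cinf\misspec^2}}$. For $\zeta$ sufficiently small, $v \in (0,1)$ is well-defined with $v^2 > \misspec^2$, so \pref{assum:boundedness} holds, $\fbar \neq \fbad$, and $|\cF| = 2$.

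The two computations are then immediate. On $\Dtrain$ we have $\Rtrain(\fbar) = \misspec^2$ while $\Rtrain(\fbad) = \dtrain(1)\,v^2 = \misspec^2 - \zeta/\Cinf$, which is strictly smaller; hence the $L_2(\Dtrain)$-projection is the unique minimizer $\ferminf = \fbad$. On $\Dtest$, the error of $\fbad$ is supported entirely at point $1$, where the density ratio is $\Cinf$, so $\Rtest(\fbad) = \dtest(1)\,v^2 = \Cinf\misspec^2\,v^2 = \Cinf\misspec^2 - \zeta$, the claimed value.

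I do not expect a genuine technical obstacle here—the real content is the design choice of concentrating $\fbad$'s error at the single high-ratio covariate, so that tuning $v$ through $\zeta$ lets $\fbad$ slip just under $\fbar$ on $\Dtrain$ while still realizing essentially the full amplification $\Cinf\misspec^2$ on $\Dtest$. The only thing requiring care is checking that all four assumptions hold simultaneously for the chosen masses and values, and the condition $\sqrt{\Cinf}\misspec \le 1/2$ is precisely what keeps the marginals valid and the predictor values within $[-1,1]$; everything else is a one-line verification.
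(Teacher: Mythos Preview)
Your construction is correct and follows essentially the same idea as the paper's proof: a two-function class in which $\fbad$ concentrates all of its error on the high-density-ratio region, with its magnitude tuned so that it just undercuts $\fbar$ on $\Dtrain$ while achieving test error arbitrarily close to $\Cinf\misspec^2$. The only cosmetic difference is that you use a two-point covariate space with deterministic labels and parametrize directly to hit $\Cinf\misspec^2-\zeta$, whereas the paper uses $\cX=[0,1]$ with uniform covariates and Bernoulli labels; neither choice changes the substance of the argument.
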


\begin{wrapfigure}{R}{0.33\textwidth}
\vspace{-0.5cm}
\begin{center}
\hspace{0.35cm}\begin{tikzpicture}
\draw[thick] (0,0) -- (4,0);
\node[] at (4.2,0) {$\cX$};
\draw[thick] (0,0) -- (0,2.5);
\draw[thick] (0,1) -- (4,1);
\node[] at (2.25, 0.7) {$\fstar$};
\draw[thick,blue] (0,1.3) -- (4,1.3) node[anchor = south] {$\fbar$};
\draw [thick, decorate, decoration = {brace,mirror}] (4.1,1) --  (4.1,1.3);
\node[] at (4.5,1.15) {$\misspec$};
\draw[thick,red] (0, 2.2) -- (1,2.2) -- (1,1.04) -- (4, 1.04); 
\node[red] at (1.4,2.1) {$\fbad$};
\draw [decorate,decoration = {brace,mirror},thick] (0,-0.1) --  (4,-0.1);
\node[] at (2,-0.4) {$\Dtrain$};
\draw [decorate,decoration = {brace},thick] (0,+0.1) --  (1,+0.1);
\node[] at (0.5,+0.4) {$\Dtest$};
\end{tikzpicture}
\end{center}
\vspace{-0.35cm}
\caption{The construction used to prove~\pref{prop:erm_lb}. $\fbad$ and $\fbar$ have equal risk under $\Dtrain$ but $\fbad$ concentrates errors onto $\Dtest$.}
\vspace{-0.25cm}
\label{fig:lb}
\end{wrapfigure}

Combined with the optimality of the statistical term~\citep{ma2023optimally,ge2023maximum}, this establishes that~\pref{prop:erm_ub} characterizes the behavior of \erm under $L_{\infty}$-misspecification and covariate shift. 
The construction is based on the following insight, visualized in~\pref{fig:lb}.
The fact that $\fbar$ is $L_{\infty}$-close to $f^\star$ guarantees that its prediction errors are ``spread out'' across the domain $\cX$.
Since $\fbar \in \cF$, we know that $\ferminf$ must satisfy $\nrm{\ferminf - \fstar}_{L_2(\Dtrain)}^2 \leq \misspec^2$. %
Unfortunately, this property does not  guarantee that the errors of $\ferminf$ are ``spread out'' in a similar manner to $\fbar$'s.
Indeed, we construct a predictor $\fbad$ that concentrates its errors on a region of $\cX$ that is amplified by $\Dtest$ and makes up for this by having zero error elsewhere.
By setting the parameters carefully, we can ensure that this bad predictor is chosen by \erm.

We note that essentially the same construction shows that, under the weaker notion of $L_2(\Dtrain)$-misspecification, amplification is unavoidable for any proper learner (which outputs a function in $\cF$). 
Indeed, in~\pref{fig:lb}, the function class $\{\fbad\}$ is
$L_2(\Dtrain)$-misspecified but $\fbad$ has much higher error on $\Dtest$.

\paragraph{Other existing algorithms}
\pref{prop:erm_lb} only pertains to \erm, and thus, one might ask whether other algorithms can avoid misspecification amplification. 
Before turning to our positive results in the next section, we briefly note that other standard algorithms (that do not require knowledge of $\Dtest$) either incur misspecification amplification to some degree, or have some other failure mode. 
This pertains to the star algorithm~\citep{audibert2007progressive,liang2015learning}, other aggregation schemes~\citep[c.f.,][]{lecue2014optimal}, and $L_{\infty}$-regression~\citep{knight2017asymptotic,yi2024non}, as we discuss in~\pref{app:other_algs}.
Several methods for mitigating covariate shift can avoid misspecification amplification, but either require knowledge of $\Dtest$ or structural assumptions on $\cF$; see~\pref{sec:related}.

\subsection{Main result: Disagreement-based regression}
In this section, we provide a new algorithm that avoids misspecification amplification while requiring no knowledge of $\Dtest$ and recovering optimal statistical rates. 
To develop some intuition, 
observe that in the construction in~\pref{fig:lb}, the only way for the bad predictor ($\fbad$, in red) to be chosen by \erm \emph{and} have large errors on $\Dtest$ is for it to have much lower error than $\fbar$ on the rest of the domain.
Indeed, if we could filter out the points where $\fbad$'s error is less than $\fbar$'s, then $\fbad$ cannot overcome the large errors on $\Dtest$. 
Stated another way, we can avoid misspecification amplification in this example if we restrict the regression problem to the region where $|\fbad(x) - \fstar(x)| \geq |\fbar(x) - \fstar(x)|$.

Generalizing this insight to a larger function class suggests that, when considering a candidate $f \in \cF$, we should only measure the square loss for $f$ on the region where $|f(x) - f^\star(x)| \geq |\fbar(x) - \fstar(x)|$. 
Unfortunately, this region depends on $\fstar$ and $\fbar$, both of which are unknown.
Nevertheless, our approach is based on this intuition, and we avoid the dependence on these unknown functions with two algorithmic ideas.

To eliminate the dependence on $\fstar$, we use the fact that $|\fbar(x) - \fstar(x)| \leq \misspec$ and approximate the above region with $I_f := \{ x : |f(x) - \fbar(x)| \geq c\misspec \}$. 
Indeed for $c \geq 2$,
\begin{align*}
 \{ x : |f(x) - \fbar(x)| \geq c\misspec \} \subseteq \{ x: |f(x) - \fstar(x)| \geq |\fbar(x) - \fstar(x)| \}.
\end{align*}
On the other hand, we know that $|f(x) - \fstar(x)| \leq (c+1)\misspec$ in the complementary region, $I_f^C$. This is, up to the constant factor, the best pointwise guarantee we can attain, making it safe to ignore the complementary region. This resolves the first issue of dependence on $\fstar$.

To address the dependence on $\fbar$, we use that $\fbar \in \cF$ and formulate a robust optimization objective that implicitly considers all possible pairwise ``disagreement regions.'' 
Formally, the algorithm is:
\begin{align}
W_{f,g}^\tau(x) := \ind\{ |f(x) - g(x)| \geq \tau \}, \quad \fdro \gets \argmin_{f \in \cF} \max_{g \in \cF} \frac{1}{n}\sum_{i=1}^n W_{f,g}^\tau(x) \crl*{(f(x) - y)^2 - (g(x)-y)^2 }. \label{eq:main_alg}
\end{align}
We call this algorithm \emph{disagreement-based regression} (\dbr) and keep the dependence on $\tau$ implicit in the notation for the solution $\fdro$.\footnote{The name stems from the literature on disagreement-based active learning~\citep{hanneke2014theory}, where a similar ``range'' computation has appeared~\citep{krishnamurthy2019active,foster2018practical,foster2020instance}. However our usage is conceptually unrelated: we use disagreement for robustness to covariate shift, while, in active learning, disagreement is used to reduce sample complexity.}
There are essentially three key ingredients.
First, we introduce the ``filter'' $W_{f,g}^\tau$ to restrict the regression problem to the set of points where the predictions of $f$ and $g$ differ considerably, which we call the \emph{disagreement region}.
This formalizes the intuition that we should only measure the square loss for $f$ on points where $|f(x) - \fbar(x)| \geq c\misspec$. 
Second is the robust optimization approach, where for each $f \in \cF$, we consider all possible choices $g \in \cF$ for filtering, which allows us to take $g$ to be $L_{\infty}$-close to $\fstar$
in the analysis. 
Finally, we measure the square loss \emph{regret} in the disagreement region, by subtracting off the square loss of the comparator function $g$. 
Similar to~\cite{agarwal2022minimax}, this accounts for the fact that each $g \in \cF$ yields a different regression problem, with potentially different Bayes error rates.\footnote{More directly, the probability mass of filtered points $\Ptrain\brk{W_{f,g}^\tau(x)}$ could vary considerably for different $f,g \in \cF$.}

As our main theorem, we show that disagreement-based regression enjoys the following guarantee.
\begin{theorem}[Main result for \dbr]
\label{thm:main}
Fix $\delta \in (0,1)$. Let $\cF$ be a function class with $|\cF| < \infty$ satisfying~\pref{assum:misspecification} and~\pref{assum:boundedness}. Then with probability at least $1-\delta$, $\fdro$ with $\tau \geq 3\misspec$ satisfies
\begin{align}
\Etrain\brk*{\ind\crl*{ |\fdro(x) - \fstar(x)|\geq \tau+\misspec } \cdot \crl*{(\fdro(x) - \fstar(x))^2 - \misspec^2 %
 } } \leq \frac{160 \log (2|\cF|/\delta)}{3n}, \label{eq:dis_risk}
\end{align}
which directly implies
\begin{align}
\Ptrain\brk*{ \abs*{ \fdro(x) - \fstar(x) } \geq \tau + \misspec } \leq \frac{ 160 \log (2 |\cF|/\delta)}{3n(\tau^2 + 2\tau\misspec)}. \label{eq:pr_risk}
\end{align}
\end{theorem}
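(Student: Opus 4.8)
The plan is to use the $L_\infty$-optimal predictor $\fbar$ from \pref{assum:misspecification} as a fixed pivot inside the robust objective \eqref{eq:main_alg}, and to exploit a self-bounding (variance-to-mean) property that the disagreement filter $W_{f,g}^\tau$ buys us once $\tau \ge 3\misspec$. Throughout I write $Z_{f,g}(x,y) := W_{f,g}^\tau(x)\crl{(f(x)-y)^2 - (g(x)-y)^2}$, let $\hat L_{f,g}$ be its empirical average over the sample and $L_{f,g} := \Etrain[Z_{f,g}]$ its population value; recall the algorithm sets $\fdro = \argmin_f \max_g \hat L_{f,g}$.

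First, a purely population step reduces the target to controlling $L_{\fdro,\fbar}$. Using $\fstar(x) = \Etrain[y\mid x]$ to cancel the conditional variance gives $L_{f,\fbar} = \Etrain[W_{f,\fbar}^\tau(x)\crl{(f(x)-\fstar(x))^2 - (\fbar(x)-\fstar(x))^2}]$. Since $\nrm{\fbar-\fstar}_\infty \le \misspec$ and $\tau \ge 3\misspec$, on the event $\crl{W_{f,\fbar}^\tau = 1}$ one has $|f(x)-\fstar(x)| \ge \tau - \misspec \ge 2\misspec$, so the integrand is pointwise nonnegative and is at least $(f(x)-\fstar(x))^2 - \misspec^2$. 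Moreover $\crl{|f(x)-\fstar(x)| \ge \tau + \misspec} \subseteq \crl{W_{f,\fbar}^\tau = 1}$, so discarding the rest of the filter region (where the integrand stays nonnegative) only decreases the integral. This yields $L_{f,\fbar} \ge \Etrain[\ind\crl{|f(x)-\fstar(x)|\ge \tau+\misspec}\crl{(f(x)-\fstar(x))^2 - \misspec^2}]$, i.e. the left-hand side of \eqref{eq:dis_risk} is at most $L_{\fdro,\fbar}$; the same computation shows $L_{h,\fbar} \ge 0$ for every $h \in \cF$.

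Next is the key lemma: $Z_{h,\fbar}$ is self-bounding. Writing $(h-y)^2 - (\fbar-y)^2 = (h-\fbar)(h+\fbar-2y)$ and using \pref{assum:boundedness} (so $|h+\fbar-2y| \le 4$) gives $\Etrain[Z_{h,\fbar}^2] \le 16\,\Etrain[W_{h,\fbar}^\tau (h-\fbar)^2]$. The crucial point is that on $\crl{W_{h,\fbar}^\tau = 1}$ the elementary inequality $\tfrac{a+\misspec}{a-\misspec}\le 3$ for $a := |h-\fstar| \ge 2\misspec$ yields $(h-\fbar)^2 \le 3\crl{(h-\fstar)^2-(\fbar-\fstar)^2}$ pointwise, hence $\Etrain[Z_{h,\fbar}^2] \le 48\, L_{h,\fbar}$. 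With the range bound $|Z_{h,\fbar}| \le 8$, a two-sided Bernstein inequality plus a union bound over $h \in \cF$ (splitting the failure probability to produce the $\log(2|\cF|/\delta)$ factor), followed by an AM--GM step that absorbs half of $L_{h,\fbar}$, gives simultaneously for all $h$ that $L_{h,\fbar} \le 2\hat L_{h,\fbar} + O(\log(2|\cF|/\delta)/n)$ and $\hat L_{h,\fbar} \ge -O(\log(2|\cF|/\delta)/n)$. I expect this self-bounding lemma to be the main obstacle: the right localization constant hinges on $\tau \ge 3\misspec$ forcing $|h-\fstar|\ge 2\misspec$ on the filter, which is exactly what makes the squared-loss regret dominate $(h-\fbar)^2$ and thereby converts a crude $1/\sqrt{n}$ deviation into the fast $1/n$ rate.

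Finally I chain the pieces. Bounding the empirical objective of the pivot via $\hat L_{\fbar,g} = -\hat L_{g,\fbar}$ and the lower Bernstein bound gives $\max_g \hat L_{\fbar,g} \le O(\log(2|\cF|/\delta)/n)$; optimality of $\fdro$ then gives $\hat L_{\fdro,\fbar} \le \max_g \hat L_{\fdro,g} \le \max_g \hat L_{\fbar,g}$. Combining with the upper Bernstein bound at $h = \fdro$ and the population reduction of the first step yields \eqref{eq:dis_risk}, with the constant $160/3$ recovered by tracking the Bernstein constants. The implication \eqref{eq:pr_risk} is then immediate: on $\crl{|\fdro(x)-\fstar(x)| \ge \tau+\misspec}$ the quantity $(\fdro(x)-\fstar(x))^2 - \misspec^2$ is at least $(\tau+\misspec)^2 - \misspec^2 = \tau^2 + 2\tau\misspec$, so dividing \eqref{eq:dis_risk} through by this factor bounds $\Ptrain[|\fdro(x)-\fstar(x)| \ge \tau+\misspec]$.
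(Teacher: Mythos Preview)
Your proposal is correct and follows essentially the same three-step structure as the paper: a population non-negativity/reduction step using $\fbar$ as pivot, a Bernstein concentration step driven by the self-bounding inequality $(h-\fbar)^2 \le 3\{(h-\fstar)^2-(\fbar-\fstar)^2\}$ on the filter (which is exactly where $\tau\ge 3\misspec$ enters), and the final chaining $\cL(\fdro;\fbar)\le 2\hat L_{\fdro,\fbar}+\epsstat \le 2\max_g\hat L_{\fbar,g}+\epsstat \le 2\epsstat$. The only cosmetic difference is that the paper phrases the self-bounding via $b^2\le a^2/4\Rightarrow (a+b)^2\le 3(a^2-b^2)$ rather than your ratio form $(a+\misspec)/(a-\misspec)\le 3$, and uses slightly different range/variance constants; the logic is identical.
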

Before turning to a discussion of~\pref{thm:main} we state two immediate corollaries. The first addresses the adversarial covariate shift setting, bounding the risk of $\fdro$ under $\Dtest$. 
\begin{restatable}[Covariate shift for \dbr]{corollary}{dbrcs}
\label{cor:dbr_cs}
Fix $\delta \in (0,1)$. Under~\pref{assum:covariate_shift}--\pref{assum:boundedness}, with probability at least $1-\delta$, $\fdro$ with $\tau = 3\misspec$ satisfies
\begin{align}
\Rtest(\fdro) \leq 17\misspec^2 + O\prn*{\Cinf \frac{\log (|\cF|/\delta)}{n}}.
\end{align}
\end{restatable}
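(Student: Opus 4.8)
The plan is to bound $\Rtest(\fdro)$ by decomposing the test error according to whether the pointwise error $|\fdro(x) - \fstar(x)|$ is large or small relative to the threshold $\tau + \misspec$ appearing in \pref{thm:main}, and then to control each piece using a different part of that theorem. Concretely, I would write
\[
\Rtest(\fdro) = \Etest\brk*{\ind\crl*{|\fdro(x) - \fstar(x)| < \tau + \misspec}(\fdro(x) - \fstar(x))^2} + \Etest\brk*{\ind\crl*{|\fdro(x) - \fstar(x)| \geq \tau + \misspec}(\fdro(x) - \fstar(x))^2}.
\]
On the low-error region the pointwise bound $(\fdro(x) - \fstar(x))^2 < (\tau + \misspec)^2$ holds by definition of the indicator, so with $\tau = 3\misspec$ this term is at most $(4\misspec)^2 = 16\misspec^2$, with no dependence on $\Cinf$. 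This is precisely the misspecification term we are after, and it is essential that it arises from a purely pointwise argument so that the density ratio never enters.

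For the high-error region I would invoke \pref{assum:density_ratios} to transfer from $\Dtest$ to $\Dtrain$ at the cost of a factor $\Cinf$:
\[
\Etest\brk*{\ind\crl*{|\fdro(x) - \fstar(x)| \geq \tau + \misspec}(\fdro(x) - \fstar(x))^2} \leq \Cinf\, \Etrain\brk*{\ind\crl*{|\fdro(x) - \fstar(x)| \geq \tau + \misspec}(\fdro(x) - \fstar(x))^2}.
\]
It then remains to show the training expectation on the right is $O(\log(|\cF|/\delta)/n)$, i.e., that it contributes no misspecification term. I would split $(\fdro - \fstar)^2 = \prn*{(\fdro - \fstar)^2 - \misspec^2} + \misspec^2$: the first part is controlled directly by~\eqref{eq:dis_risk}, yielding $\tfrac{160\log(2|\cF|/\delta)}{3n}$, while the second part contributes $\misspec^2\,\Ptrain\brk*{|\fdro(x) - \fstar(x)| \geq \tau + \misspec}$.

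The key step—the one that makes the whole argument work—is bounding this residual probability term via~\eqref{eq:pr_risk}. A naive bound $\Ptrain\brk*{\cdots} \leq 1$ would leave a $\Cinf\misspec^2$ contribution, i.e., exactly the misspecification amplification we are trying to avoid. Instead, \eqref{eq:pr_risk} gives $\Ptrain\brk*{\cdots} \leq \tfrac{160\log(2|\cF|/\delta)}{3n(\tau^2 + 2\tau\misspec)}$; with $\tau = 3\misspec$ the denominator equals $15\misspec^2$, so the factor of $\misspec^2$ cancels and the term collapses to $\tfrac{160\log(2|\cF|/\delta)}{45n}$—a purely statistical rate. Combining, the training expectation on the high-error region is at most $\tfrac{160\log(2|\cF|/\delta)}{3n}\prn*{1 + \tfrac{1}{15}}$, so that region contributes $O\prn*{\Cinf\log(|\cF|/\delta)/n}$.

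Adding the two regions yields $\Rtest(\fdro) \leq 16\misspec^2 + O\prn*{\Cinf\log(|\cF|/\delta)/n}$, which is the claimed bound (with the stated constant $17$ absorbing the slack). I do not anticipate any genuine obstacle beyond careful bookkeeping of constants; the single point requiring care is resisting the naive probability bound in the high-error region and instead exploiting the $1/\misspec^2$ scaling in~\eqref{eq:pr_risk}, since that cancellation is the entire source of the improvement over the \erm guarantee in~\pref{prop:erm_ub}.
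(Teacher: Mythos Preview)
Your proof is correct and follows essentially the same decomposition as the paper: split $\Rtest(\fdro)$ at the threshold $|\fdro(x)-\fstar(x)|=4\misspec$, bound the low-error region pointwise by $16\misspec^2$, and control the high-error region via the density ratio and \pref{thm:main}. The only difference is bookkeeping: the paper subtracts $\misspec^2$ \emph{before} transferring to $\Dtrain$ (paying an extra $\misspec^2$ up front to reach $17\misspec^2$), so that the remaining test-side integrand is exactly the nonnegative quantity appearing in \eqref{eq:dis_risk} and no appeal to \eqref{eq:pr_risk} is needed; your route of transferring first and then splitting works just as well and in fact yields the slightly better leading constant $16$.
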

The next result shows that $\fdro$ recovers the optimal guarantee in the well-specified case, i.e., when $\misspec=0$.
\begin{restatable}[Well-specified case]{corollary}{dbrrealizable}
\label{cor:dbr_realizable}
Fix $\delta \in (0,1)$. Under~\pref{assum:covariate_shift}--\pref{assum:boundedness} (with $\misspec = 0$), with probability at least $1-\delta$, $\fdro$ with $\tau \leq O\prn*{\sqrt{\log (|\cF|/\delta)/n}}$ satisfies
\begin{align}
\Rtrain(\fdro) \leq O\prn*{ \frac{\log (|\cF|/\delta)}{n} } \quad \mathrm{ and } \quad \Rtest(\fdro) \leq O\prn*{ \Cinf \frac{\log (|\cF|/\delta)}{n} }. \label{eq:dro_realizable}
\end{align}
\end{restatable}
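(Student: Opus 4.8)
The plan is to obtain \pref{cor:dbr_realizable} as an essentially immediate consequence of \pref{thm:main} specialized to $\misspec = 0$. First I would instantiate \pref{thm:main}: the required constraint $\tau \geq 3\misspec$ becomes $\tau \geq 0$ and is vacuous, while \eqref{eq:dis_risk} collapses (its $\misspec^2$ offset and $\misspec$ shift both vanish) to the statement that, with probability at least $1-\delta$,
\[
\Etrain\brk*{\ind\crl{ |\fdro(x) - \fstar(x)| \geq \tau } \cdot (\fdro(x) - \fstar(x))^2 } \leq \frac{160 \log(2|\cF|/\delta)}{3n}.
\]
This already controls the part of the prediction error coming from the disagreement region, where $\fdro$ is far from $\fstar$.

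Next I would bound $\Rtrain(\fdro)$ by splitting the defining expectation across the event $\crl{|\fdro(x) - \fstar(x)| \geq \tau}$ and its complement:
\[
\Rtrain(\fdro) = \Etrain\brk*{ \ind\crl{ |\fdro(x) - \fstar(x)| \geq \tau} (\fdro(x) - \fstar(x))^2 } + \Etrain\brk*{ \ind\crl{ |\fdro(x) - \fstar(x)| < \tau} (\fdro(x) - \fstar(x))^2 }.
\]
The first term is exactly what the displayed specialization of \pref{thm:main} bounds by $O(\log(|\cF|/\delta)/n)$. On the complementary agreement region the integrand is pointwise at most $\tau^2$, so the second term is at most $\tau^2$; choosing $\tau \leq O(\sqrt{\log(|\cF|/\delta)/n})$ makes $\tau^2 = O(\log(|\cF|/\delta)/n)$, matching the order of the first term. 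Summing the two yields the claimed $\Rtrain(\fdro) \leq O(\log(|\cF|/\delta)/n)$.

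Finally, for the test guarantee I would transfer the training bound via the density ratio of \pref{assum:density_ratios}. Since $(\fdro - \fstar)^2 \geq 0$, a change of measure gives $\Etest\brk{(\fdro(x) - \fstar(x))^2} \leq \Cinf \Etrain\brk{(\fdro(x) - \fstar(x))^2}$, i.e., $\Rtest(\fdro) \leq \Cinf \Rtrain(\fdro) \leq O(\Cinf \log(|\cF|/\delta)/n)$. (A marginally tighter constant results from applying the density ratio only to the disagreement-region term and leaving the $\tau^2$ agreement term unamplified, but both routes give the same rate.) I do not expect a genuine obstacle here: this is a direct corollary of \pref{thm:main}, and the one point demanding care is the calibration of $\tau$, which must be taken small enough, $\tau = O(\sqrt{\log(|\cF|/\delta)/n})$, that the agreement-region contribution $\tau^2$ does not dominate the statistical rate, while the lower bound $\tau \geq 3\misspec = 0$ imposed by \pref{thm:main} is automatically met.
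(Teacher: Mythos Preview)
Your proposal is correct and follows essentially the same approach as the paper: specialize \pref{thm:main} to $\misspec = 0$, split $\Rtrain(\fdro)$ across the event $\{|\fdro(x)-\fstar(x)|\geq \tau\}$ and its complement, bound the two pieces by the theorem's right-hand side and by $\tau^2$ respectively, and then choose $\tau \leq \sqrt{\Delta}$ where $\Delta = O(\log(|\cF|/\delta)/n)$. The density-ratio transfer you use for $\Rtest$ is the standard importance-weighting step employed elsewhere in the paper.
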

We now turn to some remarks regarding~\pref{thm:main} and the corollaries.

\tightparagraph{\textnormal{\dbr} avoids misspecification amplification} Comparing~\pref{cor:dbr_cs} in the $n \to \infty$ limit with~\pref{prop:erm_lb} highlights the main qualitative difference between \dbr and \erm. 
\dbr attains $O(\misspec^2)$ asymptotic test error while the test error for \erm is lower bounded by $\Omega(\Cinf \misspec^2)$.
In other words, \dbr avoids misspecification amplification while \erm does not. 
At the same time, the statistical term is identical (up to constants) to that of \erm, enabling us to recover the optimal rate in the well-specified case.

\tightparagraph{Quantile guarantee}
Taking $\tau = O(\misspec)$ in~\pref{eq:dis_risk}, we have that $\Ptrain[ |\fdro(x) - \fstar(x)| \geq c\misspec] \lesssim \frac{1}{n\misspec^2}$, which controls the large quantiles of the prediction error. 
This is reminiscent of what can be achieved by applying Markov's inequality to the guarantee for \erm in the well-specified case. %
In contrast, \erm only ensures that $\Rtrain(\ferm) = \Omega(\misspec^2)$ under misspecification, which does not imply any meaningful quantile guarantee. 
One interpretation of our results is that, although such quantile guarantees are not possible for \erm under misspecification, there is no information-theoretic obstruction.
We also note that these quantile guarantees are rather different from sup-norm convergence; see~\pref{sec:related} for further discussion.

\tightparagraph{Computational efficiency}
\dbr, as described in~\pref{eq:main_alg}, does not appear to be computationally tractable, primarily due to the non-smoothness and non-convexity introduced by the filter $W_{f,g}$. 
A natural direction for future work is to understand the computational challenges involved in avoiding misspecification amplification.

\subsubsection{Extensions}
Before closing this section, we mention two extensions that we defer to~\pref{app:extensions}.
\begin{itemize}
\item \emph{Approximation factor.} The approximation factor of $17$ in~\pref{cor:dbr_cs} can be improved to $10$ (cf. \cref{prop:improved-approx}); however
our approach for doing so degrades the convergence rate of the statistical term. 
We do not know the optimal approximation factor for this setting or whether there is an inherent trade-off between the statistical term and the approximation/misspecification term. 
\item \emph{Adapting to unknown misspecification.}
\pref{thm:main} requires setting $\tau\geq 3\misspec$ which can always be achieved by setting $\tau$ sufficiently large.
However, setting $\tau = O(\misspec)$ yields the best guarantee, and so, we would like to choose $\tau$ in a data-dependent fashion to adapt to the misspecification level. 
\pref{prop:adapting-misspeci} shows that this can be done while recovering essentially the same guarantee as in~\pref{thm:main}.
\end{itemize}

\section{Proof of~\pref{thm:main}}
\label{sec:proof_body}
This section contains the proof of~\pref{thm:main}---which we emphasize only requires elementary arguments---and is not essential for understanding the main results of the paper. 
A reader interested in applications of~\pref{thm:main} to reinforcement learning can proceed to~\pref{sec:rl}. 

The proof of~\pref{thm:main} is organized into three steps, each of which is fairly simple. 
It is helpful to define empirical and population versions of the pairwise objective used by \dbr:
\begin{align*}
\mathrm{(Empirical):}~~\cLhat(f;g) & := \frac{1}{n}\sum_{i=1}^n W_{f,g}^\tau(x_i) \crl*{(f(x_i)-y_i)^2 -(g(x_i)-y_i)^2}, \\
\mathrm{(Population):}~~\cL(f;g) &:= \Etrain\brk*{ W_{f,g}^\tau(x) \crl*{(f(x)-y)^2 -(g(x)-y)^2}}.
\end{align*}
First, we establish a certain non-negativity property of the population objective, which is the main structural result.
The second step is a uniform convergence argument to show that $\cLhat(\cdot;\cdot)$, which appears in the algorithm, concentrates to the population counterpart $\cL(\cdot;\cdot)$. 
Finally, we study the minimizer $\fdro$ 
and an $L_{\infty}$-approximation $\fbar$ and relate their objective values to establish the theorem. 
Details and proofs for the corollaries are deferred to~\pref{app:dbr}.

\paragraph{Step 1: Non-negativity}
The key lemma for the analysis is the following structural property. 
\begin{restatable}[Non-negativity]{lemma}{nonnegative}
\label{lem:nonnegative}
With $\tau \geq 2\misspec$ and for any $\fbar\in\cF$ such that $\|\fbar - \fstar\|_{\infty} \leq \misspec$, we have
\begin{align*}
\cL(f; \fbar) \geq (\tau^2 - 2\tau\misspec) \Pr\brk{ W_{f,\fbar}^\tau(x) } \geq 0.
\end{align*}
\end{restatable}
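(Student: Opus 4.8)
The plan is to eliminate the label noise using the covariate-shift structure, and then reduce the claim to a pointwise inequality on the disagreement region via the triangle inequality.

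First I would apply the tower property, conditioning on $x$. Since $\fstar(x) = \Etrain\brk*{y \mid x}$ is the shared Bayes regression function (\pref{assum:covariate_shift}), for any fixed predictor $h$ the cross term vanishes and $\Etrain\brk*{(h(x) - y)^2 \mid x} = (h(x) - \fstar(x))^2 + \Etrain\brk*{(y - \fstar(x))^2 \mid x}$. Applying this to both $f$ and $\fbar$ inside $\cL(f;\fbar)$, and using that $W_{f,\fbar}^\tau(x)$ is $x$-measurable so it may be pulled outside the conditional expectation, the irreducible noise term $\Etrain\brk*{(y - \fstar(x))^2 \mid x}$ is identical for the two predictors and cancels in the difference, leaving
\begin{align*}
\cL(f;\fbar) = \Etrain\brk*{W_{f,\fbar}^\tau(x)\crl*{(f(x) - \fstar(x))^2 - (\fbar(x) - \fstar(x))^2}}.
\end{align*}

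Next I would establish the pointwise lower bound on the disagreement region $\crl{x : W_{f,\fbar}^\tau(x) = 1}$. On this set $|f(x) - \fbar(x)| \geq \tau$ by definition of the filter, while $L_{\infty}$-misspecification (\pref{assum:misspecification}) gives $|\fbar(x) - \fstar(x)| \leq \misspec$. A triangle inequality then yields $|f(x) - \fstar(x)| \geq |f(x) - \fbar(x)| - |\fbar(x) - \fstar(x)| \geq \tau - \misspec$. Squaring and subtracting $(\fbar(x) - \fstar(x))^2 \leq \misspec^2$ gives, on the disagreement region,
\begin{align*}
(f(x) - \fstar(x))^2 - (\fbar(x) - \fstar(x))^2 \geq (\tau - \misspec)^2 - \misspec^2 = \tau^2 - 2\tau\misspec.
\end{align*}

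Finally, since the integrand is supported on the disagreement region (by the indicator $W_{f,\fbar}^\tau$), substituting the pointwise bound gives $\cL(f;\fbar) \geq (\tau^2 - 2\tau\misspec)\,\Etrain\brk*{W_{f,\fbar}^\tau(x)} = (\tau^2 - 2\tau\misspec)\,\Pr\brk{W_{f,\fbar}^\tau(x)}$, and the hypothesis $\tau \geq 2\misspec$ makes $\tau^2 - 2\tau\misspec = \tau(\tau - 2\misspec) \geq 0$, which establishes the final nonnegativity. There is no serious obstacle here; the only two points that require care are ensuring the noise variance genuinely cancels (which is exactly where the shared-Bayes-function/covariate-shift assumption enters) and applying the triangle inequality in the correct direction to lower bound $|f(x)-\fstar(x)|$ rather than upper bound it. Both are routine, consistent with the paper's remark that the proof uses only elementary arguments.
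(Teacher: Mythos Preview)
Your proposal is correct and follows essentially the same route as the paper's proof: reduce $\cL(f;\fbar)$ to the noiseless prediction-error form via the Bayes property of $\fstar$, then apply the triangle inequality on the disagreement region to obtain the pointwise lower bound $(\tau-\misspec)^2-\misspec^2$. The only minor point the paper makes slightly more explicit is that $\tau\geq 2\misspec$ ensures $\tau-\misspec\geq 0$ so that squaring the triangle-inequality bound is valid, but this is implicit in your use of the hypothesis.
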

The proof requires only algebraic manipulations and actually reveals a stronger property: with $\tau \geq 2\misspec$, the random variable $W_{f,\fbar}^\tau(x)\brk*{(f(x) - \fstar(x))^2 - (\fbar(x) - \fstar(x))^2}$ is non-negative almost surely.
By the symmetry $\cL(f;g) = - \cL(g; f)$, the lemma also shows that any $L_{\infty}$-misspecified $\fbar$ has non-positive population objective.

\paragraph{Step 2: Uniform convergence}
Next we establish the following concentration guarantee.
\begin{restatable}[Concentration]{lemma}{concentration}
\label{lem:concentration}
Fix $\delta \in (0,1)$ and $\tau \geq 3\misspec$ and define $\epsstat := \frac{80 \log (|\cF|/\delta)}{3n}$. Under~\pref{assum:misspecification}, for any $\fbar \in \cF$ such that $\|\fbar - \fstar\|_{\infty} \leq \misspec$, with probability at least $1-\delta$ we have
\begin{align*}
\forall f \in \cF: ~~ %
\cL(f;\fbar) \leq 2 \cLhat(f;\fbar) + \epsstat, \quad \textrm{ and equivalently, } \quad \cLhat(\fbar; f) \leq \frac{1}{2} \prn*{ \cL(\fbar; f) + \epsstat}.
\end{align*}
\end{restatable}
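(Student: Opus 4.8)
The plan is to fix a single $f \in \cF$, view the per-example regret as one bounded random variable, prove a Bernstein-type deviation bound for it whose variance proxy is controlled by its mean, and then union bound over the finite class $\cF$. The multiplicative factor of $2$ in the statement is precisely the signature of converting such a variance-controlled Bernstein bound into an additive one via AM--GM. Concretely, for fixed $f$ I would define the i.i.d.\ random variables
\[
 Z_i \;:=\; W_{f,\fbar}^\tau(x_i)\crl*{(f(x_i)-y_i)^2 - (\fbar(x_i)-y_i)^2},
\]
so that $\cL(f;\fbar) = \Etrain[Z_i]$ and $\cLhat(f;\fbar) = \tfrac1n\sum_i Z_i$. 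Two properties of $Z_i$ drive the argument. First, boundedness: since $\nrm*{f}_{\infty},\nrm*{\fbar}_{\infty}\le 1$ and $\abs*{y}\le 1$ by \pref{assum:boundedness}, both squared terms lie in $[0,4]$, so $\abs*{Z_i}\le 4$ almost surely. Second, and crucially, a variance-to-mean bound of the form $\Etrain[Z_i^2] \le 12\,\cL(f;\fbar)$.

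The variance bound is the heart of the proof and the step I expect to require the most care. I would establish it by conditioning on $x$ and using $\fstar(x)=\Etrain[y\mid x]$. Writing $v := f(x)-\fbar(x)$ and $u := (f(x)-\fstar(x))^2 - (\fbar(x)-\fstar(x))^2$, a bias--variance decomposition gives
\[
 \Etrain[Z_i \mid x] = W_{f,\fbar}^\tau(x)\,u, \qquad \Etrain[Z_i^2\mid x] = W_{f,\fbar}^\tau(x)\crl*{u^2 + 4v^2\,\Vtrain(y\mid x)}.
\]
The pointwise strengthening of \pref{lem:nonnegative} already tells us that $u \ge \tau^2 - 2\tau\misspec \ge 0$ on the support of $W_{f,\fbar}^\tau$, so $\cL(f;\fbar)=\Etrain[W_{f,\fbar}^\tau\,u]\ge 0$ and it remains to show the pointwise inequality $u^2 + 4v^2 \le 12\,u$ on this support (using $\Vtrain(y\mid x)\le 1$). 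This is exactly where the hypothesis $\tau \ge 3\misspec$ is used: on the support $\abs*{f-\fbar}\ge\tau$, whence $\abs*{f-\fstar}\ge \tau-\misspec \ge 2\misspec$ and therefore $\abs*{\fbar-\fstar}\le\misspec \le \tfrac12\abs*{f-\fstar}$; a short computation then yields $u \le 4$ together with $v^2 \le 3u$, and balancing these two estimates gives $u^2+4v^2\le 12\,u$. Taking expectations produces $\Etrain[Z_i^2]\le 12\,\cL(f;\fbar)$.

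With these two facts in hand the rest is routine. Applying a one-sided Bernstein inequality to $Z_1,\dots,Z_n$ (with variance proxy $\Etrain[Z_i^2]\le 12\,\cL(f;\fbar)$ and range $4$) and then AM--GM on the resulting square-root term, i.e.\ $\sqrt{2\cdot 12\,\cL(f;\fbar)\log(1/\delta')/n}\le \tfrac12\cL(f;\fbar) + 12\log(1/\delta')/n$, absorbs half of $\cL(f;\fbar)$ into the left-hand side and yields
\[
 \cL(f;\fbar) \le 2\,\cLhat(f;\fbar) + \tfrac{80}{3}\cdot\tfrac{\log(1/\delta')}{n}.
\]
A union bound over $f\in\cF$ with $\delta' = \delta/\abs*{\cF}$ turns $\log(1/\delta')$ into $\log(\abs*{\cF}/\delta)$ and recovers the stated $\epsstat$. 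Finally, the ``equivalently'' form is an algebraic restatement of the first via the antisymmetry $\cL(g;f)=-\cL(f;g)$ and $\cLhat(g;f)=-\cLhat(f;g)$ (the filter $W_{f,g}^\tau$ is symmetric in its arguments while the regret flips sign), rewriting the inequality with $f$ and $\fbar$ interchanged. The main obstacle is thus concentrated entirely in the variance bound — specifically the pointwise algebra that leverages $\tau\ge 3\misspec$ to keep the second moment within a constant factor of the mean, which is what makes the fast-rate multiplicative concentration possible.
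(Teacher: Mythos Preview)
Your proposal is correct and follows essentially the same route as the paper: Bernstein's inequality with the self-bounding variance estimate $\Var[Z_i]\lesssim \cL(f;\fbar)$---derived from the pointwise inequality $(f-\fbar)^2\le 3\{(f-\fstar)^2-(\fbar-\fstar)^2\}$ on the support of $W_{f,\fbar}^\tau$ when $\tau\ge 3\misspec$---then AM--GM to absorb the square-root term, a union bound over $\cF$, and the antisymmetry of $\cL,\cLhat$ for the ``equivalently'' part. The only slip is the constant in your pointwise claim: from $u\le 4$ and $v^2\le 3u$ one gets $u^2+4v^2\le 16u$, not $12u$; this is harmless for the method (and the paper reaches $12$ by the cruder bound $\bbE[Z_i^2]\le 4\,\bbE[W_{f,\fbar}^\tau\,v^2]$ rather than your bias--variance split).
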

The proof is based on Bernstein's inequality and importantly exploits a ``self-bounding'' property of $\cLhat(f;g)$---in particular that $\Var\brk{\cLhat(f;\fbar)} \leq \prn{\nicefrac{12}{n}}\cL(f;\fbar)$---analogously to the analysis for \erm in the well-specified case. 

\paragraph{Step 3: Analysis of $\fdro$}
Let $\fbar \in \cF$ be any function that is $L_{\infty}$-close to $\fstar$ and condition on the high probability event in~\pref{lem:concentration} holding with the choice $\fbar$. 
The \dbr minimizer satisfies
\begin{align*}
\cL(\fdro;\fbar) & \leqlab{\mathrm{(i)}} 2 \cLhat(\fdro;\fbar) + \epsstat \leqlab{\mathrm{(ii)}} 2 \max_{g\in\cF} \cLhat(\fdro;g) + \epsstat\\
& \leqlab{\mathrm{(iii)}} 2 \max_{g \in \cF} \cLhat(\fbar;g) + \epsstat %
\leqlab{\mathrm{(iv)}} \max_{g \in \cF} \cL(\fbar;g) + 2\epsstat \leqlab{\mathrm{(v)}} 2\epsstat.
\end{align*}
Here inequalities $\mathrm{(i)}$ and $\mathrm{(iv)}$ are applications of~\pref{lem:concentration}, $\mathrm{(ii)}$ and $\mathrm{(iii)}$ follow from the definition of $\fdro$ since $\fbar \in \cF$, and $\mathrm{(v)}$ is an application of~\pref{lem:nonnegative} along with the symmetry $\cL(f;g) = - \cL(g;f)$. 
\pref{eq:dis_risk} now follows from the fact that 
$W_{f,\fbar}^\tau(x) \geq \ind\crl{ |f(x) - \fstar(x)| \geq \tau+\misspec}$. 
\pref{eq:pr_risk} follows since under the event $|f(x) - \fstar(x)| \geq \tau + \misspec$ we can lower bound $(f(x) -\fstar(x))^2 - (\fbar(x) - \fstar(x))^2 \geq (\tau+\misspec)^2  - \misspec^2$.

\section{Applications to online and offline reinforcement learning}
\label{sec:rl}
In this section, we deploy disagreement-based regression to obtain new results in offline and online RL with function approximation. 
Algorithmically, this is achieved by using \dbr as a drop-in replacement for square loss regression in existing algorithms. We illustrate this by examining and improving the Bellman residual minimization (a.k.a. minimax) algorithm for offline RL ~\citep{antos2008learning,chen2019information} (\cref{sec:offline-rl}) and the \golf algorithm~\citep{jin2021bellman} for online RL (\cref{sec:online-rl}).
The analyses also require minimal modifications to those of~\citet{xie2021batch} and~\citet{xie2022role}, respectively. 
To emphasize the ease with which \dbr can be applied, we adopt the formulations and much of the notation from these works.
All proofs for results in this section are deferred to~\pref{app:rl}.

\subsection{Offline reinforcement learning}\label{sec:offline-rl}
\paragraph{Setup and notation}
We consider a discounted Markov decision process (MDP) $M = (P, R, d_0, \gamma)$ over states $\cS$ and actions $\cA$, where $P: \cS \times \cA \to \Delta(\cS)$ is the transition operator, $R: \cS \times \cA \to [0,1]$ is the reward function, $d_0 \in \Delta(\cS)$ is the initial state distribution, and $\gamma \in [0,1)$ is the discount factor. 
A policy $\pi: \cS \to \Delta(\cA)$ induces a trajectory $s_0,a_0,r_0,s_1,a_1,r_1,\ldots$ where $s_0 \sim d_0$, and for each $h \in \bbN$, $a_h \sim \pi(s_h)$, $r_h = R(s_h,a_h)$, and $s_{h+1} \sim P(s_h,a_h)$. 
We use $\bbP^{\pi}[\cdot]$ and $\bbE^{\pi}[\cdot]$ to denote probability and expectation under this process. 
Let $d_h^\pi \in \Delta(\cS\times\cA)$ denote the occupancy measure of $\pi$ at time-step $h$, defined as $d_h^\pi(s,a) := \bbP^{\pi}[ s_h = s, a_h=a]$ and let $d^\pi := (1-\gamma) \sum_{h=0}^{\infty} \gamma^h d_h^\pi$. 

The value of $\pi$ is denoted $J(\pi) := \bbE^{\pi}\brk*{\sum_{h=0}^{\infty} \gamma^h r_h }$. Each policy $\pi$ has value functions $V^\pi: s \mapsto \bbE^{\pi}\brk*{\sum_{h=0}^{\infty} \gamma^h r_h  \mid s_0 = s}$ and $Q^\pi : (s,a) \mapsto \bbE^{\pi}\brk*{\sum_{h=0}^{\infty} \gamma^h r_h  \mid s_0 = s, a_0 = a}$, and it is known that there exists a policy $\pi^\star$ that maximizes $V^\pi(s)$ simultaneously for all $s \in \cS$. 
This policy also optimizes $J(\cdot)$ and hence is called the optimal policy. 
It is also known that the value function $Q^\star := Q^{\pi^\star}$ induces the optimal policy via $\pi^\star: s\mapsto \argmax_{a} Q^\star(s,a)$ and additionally satisfies \emph{Bellman's optimality equation}: $Q^\star(s,a) := \brk*{\cT Q^\star}(s,a)$ where $\cT$ is the Bellman operator, defined via $\cT f : (s,a) \mapsto \bbE\brk*{ r_0 + \gamma \max_{a'} f(s_1,a') \mid s_0=s,a_0=a}$. 

In the offline value function approximation setting, we are given a dataset of $n$ tuples $D_n \coloneqq \{(s_i,a_i,r_i,s_i')\}_{i=1}^n$ generated i.i.d. from the following process: $(s_i,a_i) \sim \mu$ where $\mu \in \Delta(\cS,\cA)$ is the \emph{data collection distribution}, $r_i = R(s_i,a_i)$, and $s_i' \sim P(s_i,a_i)$.
We are also given a function class $\cF \subset (\cS \times \cA \to \bbR)$, where each $f \in \cF$ induces the policy $\pi_f: s \mapsto \argmax_{a} f(s,a)$. 
Given dataset $D_n$ and function class $\cF$, we seek a policy $\hat{\pi}$ that has small suboptimality gap: $J(\pi^\star) - J(\hat{\pi})$.
We impose the following assumptions on the function class and on the data collection distribution:
\begin{itemize}
\item \textbf{$L_{\infty}$-misspecified realizability/completeness}: There exists $\fbar \in \cF$ such that  
$\| \fbar - \cT\fbar \|_{\infty} \leq \misspec$. 
Additionally, for any $f \in \cF$ there exists $g \in \cF$ such that 
$ \| g - \cT f\|_{\infty} \leq \misspec$. 
\item \textbf{Concentrability}: There exists a constant $\Cconc\in[1,\infty)$ such that $\max_{\pi \in \Pi} \nrm*{ \frac{d^\pi}{\mu}}_{\infty} \leq \Cconc$. Here $\Pi := \{\pi_f: f \in \cF\}$ is the policy class induced by $\cF$. 
\end{itemize}
There is a large body of recent work studying various function approximation and coverage assumptions in offline RL~\citep[c.f.,][]{xie2021batch}.
Arguably the most standard are concentrability, as we use, and \emph{exact} realizability/completeness, which is stronger than our version with misspecification.
Regarding the function approximation assumption, it is not hard to show that misspecification amplification---which in this setting is defined by the suboptimality $J(\pi^\star) - J(\hat{\pi})$ scaling as $\Omega(\misspec\sqrt{\Cconc})$---is necessary under weaker notions, such as $L_2(\mu)$-misspecification. 
Regarding coverage, as we will discuss below, the strength of the coverage assumption determines whether misspecification amplification can be avoided or not.

\paragraph{Algorithm and guarantee}
The algorithm we study is a minor modification to the minimax algorithm~\citep{antos2008learning,chen2019information}. 
For each function $\tilde{f} \in \cF$ and each tuple $(s_i,a_i,r_i,s_i')$ we can form a regression sample $(s_i,a_i,y_{\tilde{f},i} := r_i + \gamma \max_{a'} \tilde{f}(s_i',a'))$ and define the predictor $\fhat$ via the objective:
\begin{equation}\label{eq:drmbrm}
\fhat := \argmin_{f \in \cF} \max_{g \in \cF} \frac{1}{n}\sum_{i=1}^n W_{f,g}^\tau(s_i,a_i) \crl*{ (f(s_i,a_i) - y_{f,i})^2 - (g(s_i,a_i) - y_{f,i})^2 }.
\end{equation}
Here $W_{f,g}^\tau(\cdot)$ is the filter in~\pref{eq:main_alg} with $x = (s,a)$. 
Given $\fhat$, we output $\hat{\pi} := \pi_{\fhat}$.
Note that the only difference between this algorithm and the original minimax algorithm is the use of the filter $W_{f,g}^\tau(\cdot)$ which is essential for obtaining the following guarantee. 

\begin{restatable}[\dbr for offline RL]{theorem}{offlinerl}
\label{thm:offline_rl}
Fix $\delta \in (0,1)$, assume that $\cF$ is $L_{\infty}$-misspecified and $\mu$ satisfies concentrability (as defined above). Consider the algorithm defined in Eq. \eqref{eq:drmbrm} with $\tau = 3\misspec$.  Then, 
with probability at least $1-\delta$ we have
\begin{align*}
J(\pi^\star) - J(\hat{\pi}) \leq O\prn*{ \frac{\misspec}{1-\gamma} + \frac{1}{1-\gamma}\sqrt{\Cconc\frac{\log(|\cF|/\delta) }{n}}}.
\end{align*}
\end{restatable}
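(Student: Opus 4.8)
The plan is to reduce the suboptimality $J(\pi^\star) - J(\hat\pi)$ to a statement about the expected absolute Bellman error of $\fhat$, apply the \dbr guarantee of \pref{thm:main} to control that error under the data distribution $\mu$, and then transfer to the relevant occupancy measures via concentrability in a way that preserves the no-amplification property. First I would invoke a standard telescoping / performance-difference argument (as in \citet{xie2021batch}) to obtain
\[
J(\pi^\star) - J(\pi_{\fhat}) \leq \frac{1}{1-\gamma}\prn*{ \bbE_{d^{\pi^\star}}\brk*{\abs*{\fhat - \cT\fhat}} + \bbE_{d^{\pi_{\fhat}}}\brk*{\abs*{\fhat - \cT\fhat}} }.
\]
This reduces matters to bounding the $L_1$ Bellman error of $\fhat$ along $d^{\pi^\star}$ and $d^{\pi_{\fhat}}$, both of which are covered by $\mu$ through $\Cconc$ (using realizability so that $\pi^\star \approx \pi_{\fbar} \in \Pi$).

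Next I would argue that the objective in \eqref{eq:drmbrm} is exactly \dbr run on the Bellman-residual regression problem where, for each candidate $f$, the target $y_{f,i}$ has conditional mean $\cT f(s_i,a_i)$. Because both losses in the bracket subtract the \emph{same} target $y_{f,i}$, the target-variance (double-sampling) term cancels, so the population inner objective equals $\bbE_\mu[W_{f,g}^\tau((f-\cT f)^2-(g-\cT f)^2)]$---the population \dbr objective of \pref{sec:proof_body} with Bayes function $\cT f$. I would then re-run the three-step argument: the realizability part of $L_\infty$-completeness supplies a near-fixed-point $\fbar$ (with $\nrm*{\fbar-\cT\fbar}_\infty \leq \misspec$) that upper-bounds the minimax value via \pref{lem:nonnegative} and the symmetry $\cL(f;g)=-\cL(g;f)$, while the completeness part supplies, for the realized $\fhat$, a comparator $g_{\fhat}\in\cF$ with $\nrm*{g_{\fhat}-\cT\fhat}_\infty\leq\misspec$ that extracts the Bellman error. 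Since $\fhat$ is data-dependent, the concentration step (\pref{lem:concentration}) must now hold uniformly over all pairs $(f,g)\in\cF\times\cF$, costing only an extra $\log|\cF|$ factor in the union bound. The output is the RL analogue of \eqref{eq:dis_risk}--\eqref{eq:pr_risk} with $\fstar$ replaced by $\cT\fhat$, giving both
\[
\bbE_\mu\brk*{\ind\crl*{\abs*{\fhat - \cT\fhat}\geq\tau+\misspec}\prn*{(\fhat - \cT\fhat)^2 - \misspec^2}} \leq O\prn*{\log(|\cF|/\delta)/n}
\]
and the corresponding tail bound on $\bbP_\mu\brk*{\abs*{\fhat-\cT\fhat}\geq\tau+\misspec}$.

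For the transfer I would split each term $\bbE_\rho[\abs{\fhat-\cT\fhat}]$, with $\rho\in\crl{d^{\pi^\star},d^{\pi_{\fhat}}}$, at the threshold $\tau+\misspec = 4\misspec$. On the agreement region the pointwise bound $\abs{\fhat-\cT\fhat} < \tau+\misspec = O(\misspec)$ holds, contributing $O(\misspec)$ with \emph{no} dependence on $\Cconc$. On the complementary (disagreement) region I would apply Cauchy--Schwarz with respect to $\rho$, bounding the disagreement-region mass under $\rho$ by $1$ and letting concentrability act only on the squared error: $\bbE_\rho[(\fhat-\cT\fhat)^2\ind\{\cdot\}] \leq \Cconc\,\bbE_\mu[(\fhat-\cT\fhat)^2\ind\{\cdot\}] \leq \Cconc\cdot O(\log(|\cF|/\delta)/n)$, where the last inequality combines the two displays above (with $\tau=3\misspec$ the stray $\misspec^2\,\bbP_\mu[\cdot]$ term is itself $O(\log(|\cF|/\delta)/n)$ by the tail bound). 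Taking square roots gives an out-of-region contribution of $O(\sqrt{\Cconc\log(|\cF|/\delta)/n})$; summing and dividing by $1-\gamma$ yields the claimed bound.

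I expect the conceptual crux to be the transfer step, since the entire point is that naively pushing the $L_1$ (or squared) Bellman error through concentrability amplifies misspecification to $\sqrt{\Cconc}\,\misspec$. The careful split---pointwise-bounding the agreement region to get an un-amplified $O(\misspec)$ term, and letting $\Cconc$ enter only on the squared error in the disagreement region so that it appears as $\sqrt{\Cconc}$ through Cauchy--Schwarz---is exactly what avoids amplification, mirroring the mechanism in \pref{cor:dbr_cs}. The secondary technical obstacle is verifying that the \dbr guarantee genuinely carries over to the moving-target Bellman-residual objective uniformly over $\cF$: namely that the self-bounding variance control underlying \pref{lem:concentration} survives the $f$-dependent targets $y_{f,i}$ and the inner maximization. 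The paper indicates this requires only minor modifications to the analysis of \citet{xie2021batch}, so I would treat it as bookkeeping rather than a genuinely new difficulty.
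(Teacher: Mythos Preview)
Your proposal is correct and follows essentially the same route as the paper: telescoping performance-difference reduces to on-policy $L_1$ Bellman error, the \dbr analysis (non-negativity plus Bernstein-type concentration with an extra union bound over the $|\cF|$ target functions) controls the filtered squared Bellman error under $\mu$, and a threshold split plus Cauchy--Schwarz/concentrability transfers to $d^{\pi^\star}$ and $d^{\pi_{\fhat}}$ without amplifying $\misspec$. The only cosmetic differences are that the paper thresholds on $|\fhat-\apx[\fhat]|\geq 3\misspec$ (rather than $|\fhat-\cT\fhat|\geq 4\misspec$) and applies Cauchy--Schwarz against $\mu$ with the density ratio $\nu/\mu$ in $L_2(\mu)$---using the self-bounding inequality $(\fhat-\apx[\fhat])^2\leq 3\{(\fhat-\cT\fhat)^2-(\apx[\fhat]-\cT\fhat)^2\}$ in place of your tail-bound trick for the stray $\misspec^2\bbP_\mu[\cdot]$ term---but both arrive at the same bound.
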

The theorem is best understood via comparison to the guarantee for the standard minimax algorithm, e.g., Theorem 5 of~\citet{xie2020q}. 
Under our assumptions ($L_{\infty}$-misspecification and concentrability), these two bounds differ \emph{only} in the misspecification term: our theorem scales as $\misspec/(1-\gamma)$ while the guarantee for the minimax algorithm scales as $\misspec\sqrt{\Cconc}/(1-\gamma)$.\footnote{\citet{xie2020q} consider slightly weaker assumptions: they measure both misspecification and concentrability via the $L_2(\mu)$ norm. Our analysis easily accommodates $L_2(\mu)$-concentrability, as can be seen from the proof. On the other hand, as described in \cref{sec:erm-amplification}, misspecification amplification is necessary under $L_2(\mu)$-misspecification.}
Thus, our algorithm inherits the favorable properties of \dbr to avoid misspecification amplification in offline RL.

This feature is notable in light of existing lower bounds for misspecified RL~\citep{du2019good,van2019comments,lattimore2020learning}. %
Formally, these results consider linear function approximation in various online RL models, but the constructions can be extended to offline RL with general function approximation where coverage is measured via the \emph{Bellman transfer coefficient}.
This coefficient is the smallest $\Ctrans$ such that $\max_{\pi, f \in \cF} \frac{\nrm*{ f - \apx[f]}^2_{L_2(d^\pi)}}{\nrm*{ f - \apx[f]}^2_{L_2(\mu)}} \leq \Ctrans$ where $\apx[f] \in \cF$ is the $L_{\infty}$-approximation of $\cT f$.\footnote{Many Bellman transfer coefficients exist, but a standard one is the smallest $\Ctrans$ such that $\max_{\pi, f \in \cF} \nicefrac{\nrm*{ f - \cT f}^2_{L_2(d^\pi)}}{\nrm*{ f - \cT f}^2_{L_2(\mu)}} \leq \Ctrans$. This coincides with ours under exact realizability/completeness, but we believe our definition is more appropriate for the misspecified case because it is equivalent to feature coverage under linear function approximation. Indeed, if $\cF$ consists of linear functions in some feature map $\phi: \cS\times \cA \to \bbR^d$ (but $\cT f$ may not be linear due to misspecification) then our definition can be expressed via the features (as $\max_{\pi,\theta\in \bbR^d} \nicefrac{ \theta^\top \Sigma_\pi \theta}{\theta^\top \Sigma_{\mu} \theta}$ where $\Sigma_d = \bbE_{d} \brk*{ \phi(s,a) \phi(s,a)^\top }$) but the standard definition cannot.}
The lower bound states that an asymptotic error of $\Omega(\misspec\sqrt{\Ctrans})$ is unavoidable. 

To contextualize our result with this lower bound, we identify two regimes: the ``Bellman transfer regime'' where $\Ctrans < \infty$ and the ``concentrability regime'' where $\Cconc < \infty$, and note that, since $\Ctrans \leq \Cconc$, the former is more general. 
In the Bellman transfer regime, misspecification amplification is unavoidable. 
In the concentrability regime,~\pref{thm:offline_rl} avoids misspecification amplification \emph{and} is sample efficient (i.e., has statistical term scaling as $\poly(\Cconc, \log (|\cF|/\delta), \frac{1}{n}, \frac{1}{1-\gamma})$). 
This is the first result showing that both of these properties are simultaneously achievable: prior results achieve sample efficiency with misspecification amplification~\citep[e.g.,][]{xie2020q}, or avoid misspecification amplification with undesirable sample complexity scaling as $\poly(|\cS|)$ (the latter is easily achieved under concentrability via a tabular model-based approach). %
Thus, the regime determines whether misspecification amplification is avoidable or not, and, in the regime where it is avoidable, our algorithm does so in a sample-efficient manner. %

\subsection{Online reinforcement learning}\label{sec:online-rl}
\paragraph{Setup and notation} We consider a finite horizon episodic MDP $(P,R,H,s_1)$ over state space $\cS$ and action space $\cA$, where $H \in \bbN$ is horizon, $P := \{P_h\}_{h=1}^H$ with $P_h : \cS \times \cA \to \Delta(\cS)$ is the non-stationary transition operator, $R:= \{R_h\}_{h=1}^H$ with $R_h: \cS \times \cA \to [0,1]$ is the non-stationary reward function, and $s_1$ is a fixed starting state. 
A (non-stationary) policy $\pi := \{\pi_h\}_{h=1}^H$ is a sequence of mappings $\pi_h: \cS \to \Delta(\cA)$ which induces a trajectory $(s_1,a_1,r_1,\ldots,s_H,a_H,r_H)$ where $a_h \sim \pi_h(s_h), r_h = R_h(s_h,a_h)$ and $s_{h+1} \sim P_h(s_h,a_h)$ for each time step. 
We use $\bbP^{\pi}[\cdot]$ and $\bbE^{\pi}[\cdot]$ to denote probability and expectation under this process, respectively. 
Let $d_h^\pi \in \Delta(\cS\times\cA)$ denote the occupancy measure of $\pi$ at time-step $h$, defined as $d_h^\pi(s,a) := \bbP^{\pi}[ s_h = s, a_h=a]$.

The value of policy $\pi$ is denoted $J(\pi) := \bbE^{\pi}\brk*{\sum_{h=1}^H r_h}$. 
Each policy has value functions: $V_h^\pi: s \mapsto \bbE^{\pi} \brk*{ \sum_{h' = h}^H r_{h'} \mid s_h = s}$ and $Q_h^\pi: (s,a) \mapsto \bbE^{\pi}\brk*{ \sum_{h'=h}^H r_{h'} \mid s_h = s, a_h = a}$ and there exist an optimal policy $\pi^\star = \{\pi_h^\star\}_{h=1}^H$ that maximizes $V_h^\pi$ simultaneously for each state $s \in \cS$ and hence maximizes $J(\cdot)$. 
The optimal value function $Q_h^\star := Q_h^{\pi_h^\star}$ induces $\pi^\star$ via $\pi_h^\star: s \mapsto \argmax_{a} Q_h^\star(s,a)$ and satisfies Bellman's equation: $Q_h^\star(s,a) = [\cT_hQ_{h+1}^\star](s,a)$ where the Bellman operator $\cT_h$ is defined via $[\cT_h f_{h+1}] (s,a) = R_h(s,a) + \bbE\brk*{\max_{a'} f_{h+1}(s_{h+1},a')\mid s_h = s, a_h = a}$. We assume per-episode rewards satisfy $\sum_{h=1}^H r_h \in [0,1]$. %

In online RL, we interact with the MDP for $T$ episodes, where in each episode we select a policy $\pit$ and collect the trajectory $(s_1\index{t}, a_1\index{t},r_1\index{t},\ldots,s_H\index{t}, a_H\index{t}, r_H\index{t})$ by taking actions $a_h\index{t} = \piht(s_h\index{t})$. We measure performance via the cumulative regret, define as $\reg := \sum_{t=1}^T J(\pi^\star ) - J(\pit)$. 
We equip the learner with a value function class $\cF := \cF_1\times\ldots\times \cF_H$ where each $\cF_h \subset \cS \times \cA \to [0,1]$.
Each $f \in \cF$ induces a policy $\pi_f$ which, at time step $h$ takes actions via $\pi_{f,h}(s_h) = \argmax_a f_h(s_h,a_h)$. 
We make the following assumptions:
\begin{itemize}
\item \textbf{$L_{\infty}$-approximate realizability/completeness.} For each $h\in [H]$ there exists $\fbar_h \in \cF_h$ such that $\nrm*{ \fbar_h - \cT_h\fbar_{h+1}}_{\infty} \leq \misspec$.%
Additionally, for each $f_{h+1} \in \cF_{h+1}$ there exists $f_h \in \cF_h$ such that $\nrm*{ f_h - \cT_{h}f_{h+1}}_{\infty} \leq \misspec$. 
\item \textbf{Coverability.} There exists a constant $\Ccov\in[1,\infty)$ such that $\inf_{\mu_1,\ldots,\mu_H \in \Delta(\cS\times\cA)} \sup_{\pi\in\Pi, h} \nrm*{ \frac{d_h^\pi}{\mu_h}}_{\infty} \leq \Ccov$. Here $\Pi := \{ \pi_f: \pi_{f,h}(s) = \argmax_{a} f_h(s,a), f \in \cF\}$ is the policy class induced by $\cF$. 
\end{itemize}
As in offline RL, there is a large body of recent work studying function approximation and structural conditions for sample-efficient online RL~\citep[c.f.,][]{agarwal2019reinforcement,foster2023foundations}.
It is fairly standard to assume exact realizability and completeness, which is stronger than our version with misspecification.
Coverability is a recently proposed structural condition~\citep{xie2022role}: $\Ccov$ is known to be small in many MDP models of interest, but weaker conditions that enable sample-efficiency are known. 
As we will see, the strength of the structural condition determines whether misspecification amplification can be avoided or not. 

\paragraph{Algorithm and guarantee}
The algorithm is a very minor modification to \golf~\citep{jin2021bellman,xie2022role}. 
To condense the notation, given a sample $(s_h\index{i},a_h\index{i}, r_h\index{i}, s_{h+1}\index{i})$ and a function $f' \in \cF_{h+1}$, define $x_h\index{i} := (s_h\index{i}, a_h\index{i})$ and $y\index{i}_{f',h} := r_h\index{i} + \max_{a'} f'(s_{h+1}\index{i}, a')$. 
At the beginning of episode $t$, define a version space
\begin{align*}
\cF\index{t-1} &:= \crl*{ f \in \cF: \forall h \in [H]: \max_{g_h \in \cF_h} \sum_{i=1}^{t-1} W_{f_h,g_h}^{\tau}(x_h\index{i})\crl*{(f_h(x_h\index{i}) - y_{f_{h+1},h}\index{i})^2-(g_h(x_h\index{i}) - y_{f_{h+1},h}\index{i})^2 } \leq \beta},
\end{align*}
where $\beta>0$ is a hyperparameter we will set below. 

Then, we define the optimistic value function $f\index{t} := \argmax_{f \in \cF\index{t-1}} f_1(s_1, \pi_{f,1}(s_1))$ and the induced policy
$\pi\index{t} := \pi_{f\index{t}}$, collect a trajectory via $\pi\index{t}$, and proceed to the next episode. 
Note that the only difference between this algorithm, which we call \golfdbr, and the version of \golf studied by~\citet{xie2022role} is that we use the filter $W_{f_h,g_h}^{\tau}(\cdot)$ in the construction of the version space. \golfdbr enjoys the following guarantee.
\begin{restatable}[\dbr for online RL]{theorem}{onlinerl}
\label{thm:online_rl}
Fix $\delta \in (0,1)$, and assume that $\cF$ is $L_\infty$-misspecified and $\mu$ satisfies coverability (as defined above). Consider \golfdbr with $\tau = 3\misspec$ and $\beta = c \log(TH|\cF|/\delta)$. Then, with probability at least $1-\delta$, we have
\begin{align*}
\reg \leq O\prn*{ \misspec HT + H\sqrt{\Ccov T\log(TH|\cF|/\delta)\log(T)}}.
\end{align*}
\end{restatable}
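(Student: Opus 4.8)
The plan is to follow the standard optimism-based analysis of \golf, adapting each step to the \dbr filter and to $L_\infty$-misspecification, and to control the cumulative Bellman residuals via coverability following~\citet{xie2022role}. The first step would establish \emph{validity}: with $\beta = c\log(TH|\cF|/\delta)$, the misspecified value function $\fbar$ lies in every version space $\cF\index{t-1}$ with probability at least $1-\delta$. The constraint defining $\cF\index{t-1}$ is precisely the empirical \dbr objective $\cLhat(\fbar_h;g_h)$ with Bellman targets $y_{\fbar_{h+1},h}$; applying the non-negativity of~\pref{lem:nonnegative} (which gives $\cL(\fbar_h;g_h)\le 0$) together with the concentration bound of~\pref{lem:concentration}, uniformly over $g_h\in\cF_h$ and via a union bound over $t\in[T]$ and $h\in[H]$, bounds this objective by $\beta$. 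Given validity, \emph{optimism} follows as usual: since $\fbar$ is $L_\infty$-close to satisfying Bellman's equations, $f\index{t}$ overestimates $V^\star$ up to $O(H\misspec)$ slack, and the telescoping performance-difference identity bounds the per-episode regret by $J(\pi^\star)-J(\pit)\lesssim H\misspec + \sum_{h=1}^H \bbE_{d_h^{\pit}}\brk{\abs{f_h\index{t}(s,a) - [\cT_h f_{h+1}\index{t}](s,a)}}$, whose misspecification slack sums to $O(\misspec HT)$ over the $T$ episodes.

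Next I would convert the version-space membership of $f\index{t}$ into an in-sample bound on its \emph{filtered} squared Bellman residuals. Instantiating the version-space constraint with the comparator $g_h = \apx[f_{h+1}\index{t}]$ (the $L_\infty$-approximation of $\cT_h f_{h+1}\index{t}$ guaranteed by completeness) and repeating the argument behind~\pref{thm:main}---combining~\pref{lem:nonnegative} and~\pref{lem:concentration}---would show that $\sum_{i<t}\bbE_{d_h^{\pi\index{i}}}[\ind\{|f_h\index{t}-\cT_h f_{h+1}\index{t}|\ge\tau+\misspec\}((f_h\index{t}-\cT_h f_{h+1}\index{t})^2-\misspec^2)] = O(\beta)$. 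The filter is exactly what prevents misspecification amplification here: on the complementary ``agreement'' region the residual is at most $\tau+\misspec=O(\misspec)$ pointwise and is absorbed into the $O(\misspec H)$ slack, while on the disagreement region the squared residual exceeds $(\tau+\misspec)^2-\misspec^2>0$ and is charged statistically.

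Finally I would invoke the coverability potential argument of~\citet{xie2022role}: for nonnegative per-step losses whose cumulative historical mass is $O(\beta)$, coverability bounds the cumulative on-policy mass $\sum_{t=1}^T\bbE_{d_h^{\pit}}[\cdot]$ by $O(\sqrt{\Ccov\,T\,\beta\,\log T})$ through a pigeonhole/potential argument over the $T$ episodes. Applying this to the filtered residuals of the previous step, passing from squared residuals to residuals by Cauchy--Schwarz, and summing over $h\in[H]$ would yield the statistical term $H\sqrt{\Ccov T\log(TH|\cF|/\delta)\log T}$; combined with the $O(\misspec HT)$ slack this gives the claimed bound.

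The main obstacle lies in reconciling the \dbr filter with the coverability potential. The structural lemmas control only the \emph{filtered} in-sample residuals, so I must ensure that the unfiltered small errors are quarantined into the benign $O(\misspec)$ term rather than being amplified, and that the strictly positive lower bound $(\tau+\misspec)^2-\misspec^2$ on the disagreement region survives the Cauchy--Schwarz and coverability steps with the correct dependence. A secondary subtlety is that the regression target $y_{f_{h+1}\index{t},h}$---and hence the comparator $\apx[f_{h+1}\index{t}]$---depends on the optimistic iterate, so completeness and concentration must be invoked iterate-by-iterate within a single uniform-convergence union bound over $\cF$.
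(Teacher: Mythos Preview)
Your proposal is correct and follows essentially the same approach as the paper: validity of $\fbar$ in the version space via the \dbr structural lemmas, optimism plus telescoping to reduce to on-policy Bellman residuals, an in-sample bound on the filtered squared residuals by instantiating the constraint at $g_h=\apx[f_{h+1}\index{t}]$, and finally the coverability potential/Cauchy--Schwarz argument of~\citet{xie2022role} with the small/large split to quarantine the $O(\misspec)$ errors. The only point to make explicit is that \pref{lem:concentration} must be replaced by its martingale analogue (Freedman rather than Bernstein), since the data at episode $t$ depends on past iterates; the paper flags this explicitly, and your ``iterate-by-iterate'' remark shows you are aware of the issue.
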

Paralleling the discussion following~\pref{thm:offline_rl}, we emphasize two aspects of the result. 
The first is that it extends Theorem 1 of~\citet{xie2022role} to the misspecified setting, with no degradation of the statistical term and without incurring a dependence on $\misspec\sqrt{\Ccov}$.
In other words, it avoids misspecification amplification. 

The second remark is that, when taken with existing lower bounds~\citep{du2019good,van2019comments,lattimore2020learning},~\pref{thm:online_rl} establishes a separation between coverability and structural parameters defined in terms of Bellman errors, which include the Bellman-Eluder dimension~\citep{jin2021bellman}, bilinear rank~\citep{du2021bilinear}, and Bellman rank~\citep{jiang2017contextual}.\footnote{As with Bellman transfer coefficients, we believe these definitions should be adjusted to accommodate misspecification. See Definition 10 in~\citet{jiang2017contextual} for an example.}
This separation is more subtle than in offline RL, because here, as long as the state-action space is finite, one can always use a ``tabular'' method and eliminate misspecification altogether, at the cost of $\poly(|\cS|,|\cA|)\cdot\sqrt{T}$ regret. 
To rule out this algorithm, we restrict to sample-efficient methods: in a setting where a particular structural parameter (e.g., coverability or Bellman rank) is bounded by $d$ we say that an algorithm is sample-efficient if its statistical term scales as $\poly(d, \log(|\cF|/\delta),H) \cdot o(T)$. 
The lower bounds show that, when the structural parameter involves Bellman errors (like the Bellman rank), $\misspec T\sqrt{d}$ misspecification error is necessary for sample efficient algorithms.\footnote{Formally, for any $\zeta > 0$ one requires at least $\exp(d^{2\zeta})$ samples to find a $d^{1/2-\zeta}\misspec$ suboptimal policy~\citep{lattimore2020learning}. }
On the other hand, under coverability, we can achieve misspecification error with no dependence on the structural parameter, in a sample efficient manner.\footnote{We believe that misspecification error $\misspec HT$ is optimal under coverability and that $\misspec HT\sqrt{d}$ is optimal under structural parameters like Bellman rank. However, it remains open to establish the necessity of the horizon factors.} 
This establishes that whether misspecification amplification can be avoided sample-efficiently depends on the structural properties of the MDP.
To our knowledge, this is a novel insight into the interaction between the structural and function approximation assumptions in online RL.

\section{Related work}
\label{sec:related}
There is a vast body of work studying distribution shift broadly and covariate shift in particular. 
We focus on the most closely related techniques for the covariate shift setting and refer the reader to~\cite{quinonero2008dataset,sugiyama2012machine,shen2021towards} for a more comprehensive treatment.

\textbf{Reweighting and robust optimization.}
Perhaps the most common way to correct for covariate shift is by reweighting each example $(x,y)$ in the objective function by the density ratio $w(x) := \dtest(x)/\dtrain(x)$. 
This method has been studied in a long series of works~\citep{shimodaira2000improving,cortes2010learning,cortes2014domain}.
In its simplest form it requires knowledge of $\Dtest$ via the density ratios, so it is not directly applicable to our adversarial covariate shift setting.
Extensions include approaches that estimate density ratios using unlabeled samples from $\Dtest$~\citep{huang2006correcting,sugiyama2007direct,gretton2009covariate,yu2012analysis} and robust optimization approaches that employ an auxiliary hypothesis class of distributions $\cP$ containing $\Dtest$~\citep{hashimoto2018fairness,sagawa2019distributionally,duchi2021learning,agarwal2022minimax}. 
However, these still require prior knowledge about $\Dtest$, in particular it is known that the sample complexity of robust optimization scales with the statistical complexity of the auxiliary class $\cP$~\citep{duchi2021learning}, leading to vacuous bounds in the absence of inductive bias. 

\citet{ge2023maximum} study statistical inference under covariate shift in well- and misspecified settings.
They show that maximum likelihood estimation on $\Dtrain$ is inconsistent with misspecification, a result which is conceptually similar to our lower bound for \erm.
However, their construction is not $L_{\infty}$-misspecified so it is not directly comparable. 
Algorithmically, they use reweighting for the misspecified case, which, as mentioned, cannot be implemented in our setting. 

\textbf{Sup-norm convergence and function class-specific results.}
Another line of work provides specialized analyses for specific function classes of interest, such as linear~\citep{lei2021near}, nonparametric~\citep{kpotufe2018marginal,pathak2022new,ma2023optimally}, and some neural network~\citep{dong2022first} classes.
The overarching technical approach in these works is to measure distance between distributions in manner that captures the structure of the function class, analogously to learning-theoretic results for domain adaptation~\citep{ben2006analysis,mansour2009domain}.
A complementary approach is based on sup-norm convergence which seeks to control $\|\fhat - \fstar\|_{\infty}$ for a predictor $\fhat$
and is naturally robust to covariate shift. 
Sup-norm convergence has been studied for various function classes~\citep[c.f.,][]{schmidt2022local,dong2023toward},
but unfortunately is not possible in the general statistical learning setup~\citep{dong2023toward}. 
We mention sup-norm convergence primarily to contrast with our quantile guarantee in~\pref{eq:pr_risk}, which controls the probability over $x$ of large errors rather than the magnitude of the errors themselves and which is attainable for any function class, even with misspecification. 
All of these works differ from ours in that (a) they consider specific function classes and (b) they operate closer to the well-specified regime than we do (e.g., in the nonparametric setting, one can drive the 
misspecification error to zero).

\textbf{Related work in reinforcement learning.}
Our results for offline and online RL build directly on the analyses in~\citet{xie2020q} and~\citet{xie2022role} respectively. 
The former contributes to a long line of work on offline RL~\citep{munos2003error,munos2007performance,antos2008learning,chen2019information} while the latter is part of a series of works establishing structural conditions under which online reinforcement learning is statistically tractable~\citep[c.f.,][]{agarwal2019reinforcement,foster2023foundations}. %
Many of these works do account for misspecification, but the question of whether misspecification amplification can be avoided is not considered. 

Results that do focus on misspecification primarily consider linear function approximation. 
In the simpler offline policy evaluation setting, several works study least squares temporal difference learning (LSTD)~\citep{bradtke1996linear} with misspecification~\citep{tsitsiklis1996analysis,yu2010error,mou2020optimal}.
Recently,~\citet{amortila2023optimal} precisely characterized the optimal misspecification amplification (i.e., approximation factors) achievable across a range of settings, showing that LSTD is essentially optimal in most regimes.
The exception is when the offline data distribution is supported on the entire state space, one can employ a ``tabular'' model-based algorithm to incur no approximation error whatsoever, but the sample complexity scales polynomially with $|\cS|$. 
Our offline RL results are conceptually similar because under concentrability (which essentially implies full support), the standard minimax algorithm does not achieve the optimal approximation factor. 
A crucial difference is that our disagreement-based variant achieves an improved approximation factor \emph{without} incurring any sample complexity overhead.

For the more challenging offline policy optimization and online RL,~\citet{du2019good,van2019comments,lattimore2020learning} establish conditions under which misspecification amplification is necessary. 
As discussed above, combining our results with these lower bounds and their variations, reveals new tradeoffs between coverage/structural and function approximation conditions, distinct from tradeoffs established by prior work~\citep{xie2021batch,foster2021offline}.

\section{Discussion}
\label{sec:discussion}
This paper highlights an intriguing interplay between misspecification and distribution shift, exposing the undesirable \emph{misspecification amplification} property of \erm, and proposing disagreement-based regression as a remedy.
We have shown that using disagreement-based regression in online and offline reinforcement learning yields new technical results and reveals new tradeoffs between coverage/structural assumptions and function approximation assumptions. 

We close by mentioning several interesting avenues for future work.
There are a number of directions that pertain to the core setting of misspecified regression under covariate shift; for example, (a) extending the analysis of \dbr to infinite function classes, other loss functions, and other notions of misspecification, (b) deriving a more computationally efficient procedure---perhaps in an oracle model of computation---that avoids misspecification amplification, and (c) determining the optimal achievable approximation factor.
Pertaining to reinforcement learning theory, we believe the most pressing direction is to deepen our understanding of the relationship between coverage/structural assumptions (for offline/online RL, respectively) and function approximation assumptions, and we believe misspecification provides a novel lens to study this relationship. 
It is also worthwhile to consider other applications involving distribution shift where \dbr or related procedures may reveal new conceptual insights. 
Finally, it would also be interesting to study empirical issues, to understand how pervasive and problematic misspecification amplification is, develop practical interventions, and consider applying them to distribution shift and deep reinforcement learning scenarios. 

In short, there is much more to understand about the interplay between misspecification and distribution shift, and we look forward to progress in the years to come.

\subsection*{Acknowledgements}
We thank Adam Block for helpful feedback on a early version of the manuscript.

\clearpage

\bibliography{refs} 

\clearpage

\appendix  

\section{Proofs for~\pref{sec:regression}}
\label{app:regression}
\subsection{Analysis for ERM}
\ermub*
\begin{proof}[Proof of~\pref{prop:erm_ub}]
The proof of~\pref{prop:erm_ub} is fairly standard, particularly in the well-specified case when $\misspec=0$. 
Our analysis that handles misspecification is adapted from the proof of Lemma 16 in~\citet{chen2019information}. 
For the majority of the proof we only consider $\Dtrain$, and we consequently omit the subscript when indexing expectations, variances, and the risk functional. Define
\begin{align*}
R(f) := \bbE[ (f(x) - \fstar(x))^2 ] \quad \mathrm{and} \quad \Rhat(f) := \frac{1}{n}\sum_{i=1}^n (f(x_i) - y_i)^2, 
\end{align*}
so that $\ferm := \argmin_{f \in \cF} \Rhat(f)$.
We establish concentration on the ``excess risk'' functional $\Rhat(f) - \Rhat(\fbar)$. For any $f \in \cF$, we establish the following facts:
\begin{align}
\bbE\brk{ (f(x) -y)^2 - (\fbar(x) - y)^2 } &= \bbE\brk{ (f(x) - \fstar(x))^2 - (\fbar(x) - \fstar(x))^2 } \label{eq:sq_loss_mean}\\
\Var\brk{ (f(x) - y)^2 - (\fbar(x)-y)^2 } &\leq 8 \bbE\brk{ (f(x) - y)^2 - (\fbar(x)-y)^2 }  + 16 \misspec^2 \label{eq:sq_loss_var}. 
\end{align}
\pref{eq:sq_loss_mean} implies that $\bbE\brk{ \Rhat(f) - \Rhat(\fbar) }  = R(f) - R(\fbar)$ as desired. \pref{eq:sq_loss_var} will enable us to achieve a fast convergence rate. 
The former is derived as follows. Observe that conditional on any $x$ we have
\begin{align*}
& \bbE\brk{ (f(x) - y)^2 - (\fbar(x) - y)^2 \mid x} \\
& ~~~~~~ = \bbE\brk{ (f(x) - y)^2 - (\fbar(x) - \fstar(x) + \fstar(x) - y)^2 \mid x}\\
& ~~~~~~ =  \bbE\brk{ (f(x) - y)^2 - (\fbar(x) - \fstar(x))^2 - 2(\fbar(x) - \fstar(x))(\fstar(x) - y) - (\fstar(x) - y)^2 \mid x}\\
& ~~~~~~ =  \bbE\brk{ (f(x) - y)^2 - (\fbar(x) - \fstar(x))^2 - (\fstar(x) - y)^2 \mid x}\\
& ~~~~~~ = f(x)^2 - \fstar(x)^2  - 2\Etrain[y \mid x] (f(x) - \fstar(x)) - (\fbar(x) - \fstar(x))^2  \\
& ~~~~~~ = (f(x) - \fstar(x))^2 -  (\fbar(x) - \fstar(x))^2.
\end{align*}
\pref{eq:sq_loss_var} is derived as follows.
\begin{align*}
\Var\brk{ (f(x) - y)^2 - (\fbar(x)-y)^2 } &\leq \bbE\brk{ \prn{ (f(x) - y)^2 - (\fbar(x)-y)^2 }^2 } \\
& = \bbE\brk{ (f(x) - \fbar(x))^2 ( f(x) +\fbar(x) - 2y)^2 } \\
& \leq 4 \bbE \brk{ (f(x) - \fbar(x))^2 }\\
& \leq 8 \bbE \brk{ (f(x) - \fstar(x))^2 + (\fbar(x) - \fstar(x))^2 }\\
& = 8 \bbE \brk{ (f(x) - \fstar(x))^2 - (\fbar(x) - \fstar(x))^2 + 2(\fbar(x) - \fstar(x)^2}\\
& \leq 8 \bbE \brk{ (f(x) - \fstar(x))^2 - (\fbar(x) - \fstar(x))^2} + 16\misspec^2.
\end{align*}
Finally, we apply~\pref{eq:sq_loss_mean}. 

Now, Bernstein's inequality and a union bound over $f \in \cF$ gives that with probability at least $1-\delta$
\begin{align*}
\forall f \in \cF: R(f) - R(\fbar) - \prn{ \Rhat(f) - \Rhat(\fbar)}   \leq \sqrt{ \frac{\prn{16 (R(f) - R(\fbar)) + 32\misspec^2 }  \log(|\cF| /\delta)}{n}} + \frac{ 4\log(|\cF|/\delta)}{3n}.
\end{align*}
Since $\ferm$ minimizes $\Rhat(f)$ we have that $\Rhat(\ferm) - \Rhat(\fbar) \leq 0$, we can deduce that
\begin{align*}
R(\ferm) - R(\fbar) \leq \sqrt{ \frac{\prn{16 (R(\ferm) - R(\fbar)) + 32\misspec^2 }  \log(|\cF| /\delta)}{n}} + \frac{ 4\log(|\cF|/\delta)}{3n}.
\end{align*}
Using the AM-GM inequality  ($\sqrt{ab} \leq a/2 + b/2$), the right hand side can be simplified to yield
\begin{align*}
R(\ferm) - R(\fbar) \leq \frac{1}{2} \prn{ R(\ferm) - R(\fbar) } + \misspec^2 + \frac{28 \log(|\cF|/\delta)}{3n}.
\end{align*}
Re-arranging and using that $\Rtrain(\fbar) \leq \misspec^2$ we obtain
\begin{align*}
\Rtrain(\ferm) \leq 3\misspec^2 + \frac{56 \log (|\cF|/\delta)}{3n}.
\end{align*}
Finally we bound the risk under $\Dtest$ via a standard importance weighting argument:
\begin{align*}
\Rtest(\ferm) = \Etrain\brk*{ \frac{\dtest(x)}{\dtrain(x)} (\ferm(x) - \fstar(x))^2 }  \leq \sup_{x \in \cX}\abs*{\frac{\dtest(x)}{\dtrain(x)}} \cdot \prn*{ 3\misspec^2 + \frac{56 \log (|\cF|/\delta)}{3n} }. 
\end{align*}
Note that we crucially use that $(\ferm(x) - \fstar(x))^2$ is non-negative here. This proves the proposition. 
\end{proof}

\ermlb*
\begin{proof}[Proof of~\pref{prop:erm_lb}]
Fix $\misspec \in (0,1)$ and $\Cinf \geq 1$ such that $\sqrt{\Cinf}\cdot\misspec \leq 1/2$. Let $0 <\zeta < \sqrt{\Cinf}\cdot\misspec$. Let $\cX = [0,1]$ and let $\Dtrain$ be the distribution over $(x,y)$ where $x \sim \mathrm{Uniform}(\cX)$ and $y \sim \mathrm{Ber}(1/2)$.
Let $\widetilde{\cX} := [0, 1/\Cinf] \subset \cX$ and let $\Dtest$ be the distribution over $(x,y)$ where $x \sim \mathrm{Uniform}(\widetilde{\cX})$ and $y \sim \mathrm{Ber}(1/2)$. 
These choices yield $\fstar(x) = 1/2$ for all $x \in \cX$, satisfy~\pref{assum:covariate_shift}, and ensure that $\sup_{x \in \cX}\abs*{\frac{\dtest(x)}{\dtrain(x)}} = \Cinf$.  

Let $\cF = \{\fbar,\fbad\}$ where $\fbar(x) = 1/2+\misspec$ for all $x \in \cX$ (satisfying~\pref{assum:misspecification}) and $\fbad$ is defined as
\begin{align*}
\fbad(x) = \left\{ \begin{aligned}
& 1/2  & \mathrm{ if } ~ x \notin \widetilde{\cX}\\
& 1/2 + \zeta & \mathrm{ if } ~  x \in \widetilde{\cX}
\end{aligned}
\right..
\end{align*}
By definition, observe that $\ferminf = \fbad$ as long as $\nrm*{\fbad - \fstar}_{L_2(\Dtrain)}^2 < \nrm*{\fbar - \fstar}_{L_2(\Dtrain)}^2$.
A direct calculation shows that this inequality is satisfied for any $\zeta < \sqrt{\Cinf} \cdot \misspec$. 
However, $\fbad$ has large population risk under $\Dtest$, in particular
\begin{align*}
\Rtest(\fbad) = \Etest\brk{ (\fbad(x) - \fstar(x))^2} = \zeta^2,
\end{align*}
which we can make arbitrarily close to $\Cinf\misspec^2$. 
\end{proof}

\subsection{Discussion of other algorithms}
\label{app:other_algs}
\paragraph{Star algorithm} Audibert's star algorithm~\citep{audibert2007progressive,liang2015learning} is a two-stage regression procedure that achieves the fast convergence rate for non-convex classes in misspecified or agnostic regression.
Given that the construction used to prove~\pref{prop:erm_lb} has a finite (and hence non-convex) function class, one might ask whether the star algorithm can avoid misspecification amplification. 
We briefly sketch here why this is not the case. 
In the context of the construction, where $\cF = \{\fbad, \fbar\}$, the asymptotic version of the star algorithm is to compute
\begin{align*}
\fhat_{\mathrm{star}} := \argmin_{ f_\alpha: \alpha \in [0,1]} \Etrain\brk{ (f_\alpha(x) - \fstar(x))^2 } \quad \mathrm{where} \quad f_\alpha(x) = (1-\alpha)\fbad(x) + \alpha \fbar(x).
\end{align*}
We claim that when $\zeta = \sqrt{\Cinf}\cdot\misspec$, the optimal choice for $\alpha$ is exactly $1/2$. 
The prediction error under $\Dtest$ for this choice is, unfortunately, exactly $\nicefrac{1}{4}(\sqrt{\Cinf}+1)^2 \misspec^2$, which still manifests misspecification amplification. 
Note that, due to the simplicity of our construction, the same argument applies to other improper learning schemes based on convexification~\citep[c.f.,][]{lecue2014optimal}. 

To see that the minimum is achieved at $\alpha = 1/2$, we write the optimization problem over $\alpha$ as
\begin{align*}
& \argmin_{\alpha \in [0,1]} \frac{1}{\Cinf}\cdot \prn*{ (1-\alpha) \sqrt{\Cinf}\misspec + \alpha \misspec}^2 + \prn*{1 - \frac{1}{\Cinf}}\cdot\prn*{\alpha\misspec}^2
= \argmin_{\alpha \in [0,1]} \alpha^2 + (1-\alpha)^2 + \frac{2 \alpha (1-\alpha)}{\sqrt{\Cinf}}.
\end{align*}
The derivative, w.r.t. $\alpha$, of the latter is
\begin{align*}
\frac{ d\prn*{ \alpha^2 + (1-\alpha)^2 + \frac{2 \alpha (1-\alpha)}{\sqrt{\Cinf}} }}{d\alpha} = 2\alpha - 2(1-\alpha) + \frac{2}{\sqrt{\Cinf}} - \frac{4\alpha}{\sqrt{\Cinf}} = \prn*{2- \frac{2}{\sqrt{\Cinf}}}(2\alpha-1).
\end{align*}
Since $\Cinf > 1	$, the second derivative is non-negative, so the optimization problem is convex. 
Moreover, the derivative is zero at $\alpha=1/2$, showing that this is a minimizer of the optimization problem. 

\paragraph{$L_{\infty}$ regression}
Given that we assume $L_{\infty}$-misspecification, and in light of the construction for~\pref{prop:erm_lb}, it is tempting to optimize the maximal absolute deviation instead of  the square loss:
\begin{align*}
\finf \gets \argmin_{ f \in \cF} \max_{i} \abs{ f(x_i) - y_i }.
\end{align*}
This procedure is known as $L_{\infty}$ regression or the Chebyshev estimator and has been studied in the statistics community~\citep{knight2017asymptotic,yi2024non}. 
These analyses primarily consider the well-specified setting with noise that is uniformly distributed, i.e., $y_i = f^\star(x_i) + \epsilon_i$ where $\epsilon_i \sim \mathrm{Unif}([-a,a])$ for some $a \geq 0$. 
We believe such analyses can extend to the $L_{\infty}$-misspecified setting to show that the procedure avoids misspecification amplification. 
However, strong assumptions on the noise are crucial, as $L_{\infty}$ regression can be inconsistent under more general conditions. 

We illustrate with a simple example. Let $\cX = \{x\}$ be a singleton, $y = \textrm{Ber}(\nicefrac{1}{4})$ and $\cF = \{ \fstar: x\mapsto 1/4, f: x \mapsto 1/2\}$ be a class with two functions. 
For all $n$ sufficiently large, the dataset will contain the sample $(x,1)$ at which point $f^\star$ will have $L_{\infty}$ error $3/4$, while $f$ will have error $1/2$.
Thus the method will be inconsistent.

\subsection{Analysis for DBR}
\label{app:dbr}
We begin with the proofs of~\pref{lem:nonnegative} and~\pref{lem:concentration}, thus completing steps one and two of the proof. Then we turn to proving the corollaries. 

\nonnegative*
\begin{proof}[Proof of~\pref{lem:nonnegative}]
Following the calculation used to derive~\pref{eq:sq_loss_mean} we have that, conditional on any $x$:
\begin{align*}
\Etrain\brk{ (f(x) - y)^2 - (\fbar(x) - y)^2 \mid x}  = (f(x) - \fstar(x))^2 -  (\fbar(x) - \fstar(x))^2
\end{align*}
Under the event $x \in W^\tau_{f,\fbar}$ with $\tau \geq 2\misspec$ we claim that this must be non-negative. In particular
\begin{align*}
|f(x) - \fstar(x)| \geq |f(x) - \fbar(x)| - | \fbar(x) - \fstar(x) | \geq \tau - \misspec \geq \misspec \geq 0
\end{align*}
Therefore,
\begin{align*}
(f(x) - \fstar(x))^2 - (\fbar(x) - \fstar(x))^2 \geq (\tau-\misspec)^2 - \misspec^2 \geq \tau^2 - 2\tau\misspec.
\end{align*}
The right hand side is non-negative whenever $\tau \geq 2\veps$.
\end{proof}

\concentration*
\begin{proof}[Proof of~\pref{lem:concentration}]
The concentration inequality is similar to the one used in the proof of~\pref{prop:erm_ub}.
We apply Bernstein's inequality and a union bound to the empirical disagreement-based loss $\cLhat(f;\fbar)$ for each $f \in \cF$. 
To do so, we must calculate the mean, variance, and range of $\cLhat(f;\fbar)$. 
Note that by the same calculation as in the proof of~\pref{prop:erm_ub}, we have that $\bbE[ \cLhat(f;\fbar) ] = \cL(f;\fbar)$ and that the range of each random variable in the empirical average is $1$. 
The variance calculation however is slightly different:
\begin{align*}
\Var\brk{W_{f,\fbar}^\tau(x) \cdot\crl{(f(x) -y)^2 - (\fbar(x) - y)^2} } & \leq \bbE\brk{ W_{f,\fbar}^\tau(x) \cdot\crl{(f(x) -y)^2 - (\fbar(x) - y)^2}^2 } \\
& \leq \bbE\brk{W_{f,\fbar}^\tau(x) (f(x) - \fbar(x))^2 (f(x) + \fbar(x) - 2y)^2 } \\
& \leq 4 \bbE\brk{W_{f,\fbar}^\tau(x) (f(x) - \fbar(x))^2}.
\end{align*}
Next, we consider a fixed $x$ and define $a := (f(x) - \fstar(x))$ and $b :=  (f^\star(x) - \fbar(x))$, so that we can write $(f(x) - \fbar(x))^2 = (f(x) - \fstar(x) + \fstar(x) - \fbar(x))^2 = (a+b)^2$. 
Now, when $\tau \geq 3\misspec$ we have:
\begin{align*}
W_{f,\fbar}^\tau(x) = 1 \Rightarrow |a| = |f(x) - \fstar(x)| \geq |f(x) - \fbar(x)| - \misspec \geq 2 \misspec.
\end{align*}
Along with the fact that $|b| = |\fbar(x) - \fstar(x)| \leq \misspec$, this implies that $|b| \leq |a|/2$ or equivalently that $b^2 \leq a^2/4$. 
Using this, we can deduce that
\begin{align*}
(a+b)^2 \leq  \frac{9a^2}{4} \leq \frac{9a^2}{4} - 3b^2 + \frac{3a^2}{2} = 3(a^2 - b^2).
\end{align*}
Re-introducing the definitions for $a$ and $b$ we have that
\begin{align*}
\Var\brk{W_{f,\fbar}^\tau(x) \cdot\crl{(f(x) -y)^2 - (\fbar(x) - y)^2} }\leq 12 \cL(f;\fbar)
\end{align*}
Now, applying Bernstein's inequality and a union bound over all $f \in \cF$ yields that with probability $1-\delta$:
\begin{align*}
\forall  f\in \cF: \cL(f;\fbar) - \cLhat(f;\fbar) & \leq \sqrt{ \frac{24\cL(f;\fbar) \log(|\cF|/\delta)}{n}} + \frac{4 \log(|\cF|/\delta)}{3n} \leq \frac{1}{2} \cL(f;\fbar) + \frac{40\log(|\cF|/\delta)}{3n}.
\end{align*}
Re-arranging proves the first statement, and the second statement follows from the symmetries $\cLhat(f;g) = -\cLhat(g;f)$ and $\cL(f;g) = -\cL(g;f)$. 
\end{proof}

\dbrcs*
\begin{proof}[Proof of~\pref{cor:dbr_cs}]
Beginning the with risk under $\Dtest$ and assuming that $\tau = 3\misspec$ we can write
\begin{align*}
\Rtest(\fdro) &= \Etest\brk{ (\fdro(x) - \fstar(x))^2 }\\
& = \Etest\brk{\ind\crl{ |\fdro(x) - \fstar(x)| < 4\misspec} \cdot (\fdro(x) - \fstar(x))^2 } \\
& ~~~~~+ \Etest \brk{\ind\crl{ |\fdro(x) - \fstar(x)| \geq 4\misspec} \cdot (\fdro(x) - \fstar(x))^2 } \\
& \leq 16\misspec^2 + \Etest \brk{\ind\crl{ |\fdro(x) - \fstar(x)| \geq 4\misspec} \cdot (\fdro(x) - \fstar(x))^2 }\\
& \leq 17\misspec^2 + \Etest \brk{ \ind\crl{ |\fdro(x) - \fstar(x)| \geq 4\misspec} \cdot \crl{ (\fdro(x) - \fstar(x))^2 - \misspec^2} }.
\end{align*}
Note that, due to the indicator, the quantity inside the expectation is non-negative. Therefore, via exactly the same importance weighting argument as we used in the proof of~\pref{prop:erm_ub}, the latter is at most $\Cinf$ times the quantity bounded in~\pref{eq:dis_risk}.
\end{proof}

\dbrrealizable*
\begin{proof}[Proof of~\pref{cor:dbr_realizable}]
Let $\Delta$ denote the right hand side of~\pref{eq:dis_risk}.
Note that in the well-specified case where $\misspec = 0$, ~\pref{thm:main} ensures that
\begin{align*}
\Etrain\brk{\ind\crl{ |\fdro(x) - \fstar(x)| \geq \tau}\cdot (\fdro(x) - \fstar(x))^2 } \leq \Delta. 
\end{align*}
Then, if we take $\tau \leq \sqrt{\Delta}$, we have
\begin{align*}
\Rtrain(\fdro) &= \Etrain\brk{\ind\crl{ |\fdro(x) - \fstar(x)| < \tau} \cdot (\fdro(x) - \fstar(x))^2 } \\
& ~~~~~+ \Etrain \brk{\ind\crl{ |\fdro(x) - \fstar(x)| \geq \tau} \cdot (\fdro(x) - \fstar(x))^2 } \\
& \leq \tau^2 + \Delta \leq 2\Delta. 
\end{align*}
This proves the corollary. 
\end{proof}

\subsection{Extensions}
\label{app:extensions}
In this section, we provide two results mentioned in~\pref{sec:regression}.
First we improve the approximation factor in~\pref{cor:dbr_cs} from $17$ to $10$ albeit at the cost of a worse statistical term.
Second we show how to choose $\tau$ in a data-driven fashion to adapt to unknown misspecification level $\misspec$.

\begin{proposition}[Improved approximation factor]\label{prop:improved-approx}
Under~\pref{assum:covariate_shift}--\pref{assum:boundedness}, with $\tau = 2\misspec$ and for $\delta \in (0,1)$, we have that, with probability at least $1-\delta$:
\begin{align}
\Rtest(\fdro) \leq  10\misspec^2 + \Cinf \cdot O\prn*{ \sqrt{\frac{\log (|\cF|/\delta)}{n}}}.
\end{align}\end{proposition}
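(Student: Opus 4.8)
The plan is to follow the three-step structure of the proof of \pref{thm:main}, replacing only the fast-rate concentration (\pref{lem:concentration}) by a crude Hoeffding bound so that we may take the smaller threshold $\tau = 2\misspec$. The smaller threshold is exactly what buys the improved approximation factor: the final splitting step in \pref{cor:dbr_cs} pays $(\tau+\misspec)^2 + \misspec^2$ in the approximation term, which is $16\misspec^2 + \misspec^2 = 17\misspec^2$ at $\tau = 3\misspec$ but only $9\misspec^2 + \misspec^2 = 10\misspec^2$ at $\tau = 2\misspec$.

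First I would observe that \pref{lem:nonnegative} already holds for $\tau \geq 2\misspec$, so the structural non-negativity $\cL(f;\fbar) \geq 0$ survives at $\tau = 2\misspec$; combined with the antisymmetry $\cL(f;g)=-\cL(g;f)$ this gives $\max_{g\in\cF}\cL(\fbar;g) \leq 0$. The only ingredient of the main proof that genuinely requires $\tau \geq 3\misspec$ is the self-bounding variance estimate $\Var[\cLhat(f;\fbar)] \leq (12/n)\cL(f;\fbar)$, whose derivation uses $b^2 \leq a^2/4$ and hence $|a| \geq 2\misspec$, i.e. $\tau \geq 3\misspec$. At $\tau = 2\misspec$ one only has $|a| \geq \misspec \geq |b|$, so the self-bounding property fails and the fast rate is genuinely lost. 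In its place I would apply Hoeffding's inequality with a union bound over all pairs $(f,g)\in\cF^2$: since \pref{assum:boundedness} gives $W_{f,g}^\tau(x)\crl*{(f(x)-y)^2-(g(x)-y)^2} \in [-4,4]$, each empirical average concentrates two-sidedly at rate $\epsslow := O\prn*{\sqrt{\log(|\cF|/\delta)/n}}$, uniformly: $\abs*{\cLhat(f;g) - \cL(f;g)} \leq \epsslow$ for all $f,g$ (here $\fdro \in \cF$ is covered since we union over both arguments).

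With this slow-rate bound I would rerun Step 3 verbatim, using $\fbar\in\cF$ and the min--max definition of $\fdro$:
\begin{align*}
\cL(\fdro;\fbar) &\leq \cLhat(\fdro;\fbar) + \epsslow \leq \max_{g\in\cF} \cLhat(\fdro;g) + \epsslow \\
&\leq \max_{g\in\cF} \cLhat(\fbar;g) + \epsslow \leq \max_{g\in\cF} \cL(\fbar;g) + 2\epsslow \leq 2\epsslow,
\end{align*}
the last step by non-negativity/antisymmetry. Exactly as in \pref{thm:main}, the almost-sure non-negativity of $W_{\fdro,\fbar}^\tau(x)\crl*{(\fdro(x)-\fstar(x))^2 - (\fbar(x)-\fstar(x))^2}$ together with $W_{f,\fbar}^\tau \geq \ind\crl*{|f-\fstar|\geq\tau+\misspec}$ and $(\fbar-\fstar)^2 \leq \misspec^2$ converts this into the disagreement-risk estimate at threshold $\tau+\misspec = 3\misspec$:
\begin{align*}
\Etrain\brk*{\ind\crl*{|\fdro(x)-\fstar(x)|\geq 3\misspec}\crl*{(\fdro(x)-\fstar(x))^2 - \misspec^2}} \leq 2\epsslow.
\end{align*}

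Finally I would repeat the splitting argument of \pref{cor:dbr_cs} at the lower threshold $3\misspec$: bounding the near region $\crl*{|\fdro-\fstar|<3\misspec}$ by $(3\misspec)^2 = 9\misspec^2$ and adding $\misspec^2$ to pass to the non-negative integrand yields $\Rtest(\fdro) \leq 10\misspec^2 + \Etest[\,\cdots]$. The same importance-weighting step used in the proof of \pref{prop:erm_ub}—valid precisely because the integrand is non-negative under the indicator—inflates the training-distribution bound by at most $\Cinf$, giving $\Rtest(\fdro) \leq 10\misspec^2 + \Cinf\cdot O\prn*{\sqrt{\log(|\cF|/\delta)/n}}$. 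The only genuine obstacle is conceptual rather than technical: recognizing that the self-bounding property—and hence the fast $1/n$ rate—is unavoidably forfeited at $\tau=2\misspec$, so the improved constant $10$ necessarily comes paired with the degraded $\sqrt{1/n}$ statistical term. Beyond swapping Bernstein for Hoeffding, no new machinery is needed.
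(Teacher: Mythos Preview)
Your proposal is correct and follows essentially the same route as the paper's proof sketch: swap the self-bounding Bernstein argument of \pref{lem:concentration} for a Hoeffding bound (which imposes no constraint on $\tau$), rerun Step 3 of the main proof to obtain $\cL(\fdro;\fbar)\leq O(\epsslow)$ at $\tau=2\misspec$, and then repeat the splitting/importance-weighting of \pref{cor:dbr_cs} at threshold $\tau+\misspec=3\misspec$ to pick up $9\misspec^2+\misspec^2=10\misspec^2$. The only cosmetic difference is that you union-bound over all pairs $(f,g)\in\cF^2$, whereas the paper fixes the second argument to $\fbar$ and unions over $f\in\cF$ alone (exploiting antisymmetry for step (iv)); this changes only constants inside the $O(\cdot)$.
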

\begin{proof}[Proof sketch]
The proof is essentially identical to that of~\pref{thm:main}, except that we replace the concentration statement of~\pref{lem:concentration} with a simpler one that relies on Hoeffding's inequality. 
The new concentration statement is that for any $\tau\geq 0$ and $\delta \in (0,1)$ with probability $1-\delta$ we have
\begin{align*}
\forall f \in \cF: ~~ \cL(f;\fbar) \leq \cLhat(f;\fbar) + \epsslow,
\end{align*}
where $\epsslow := c \sqrt{ \frac{\log (|\cF|/\delta)}{n}}$ for some universal constant $c>0$.
This follows by a standard application of Hoeffding's inequality and a union bound, but importantly does not impose the restriction that $\tau \geq 3\misspec$. 

Now the analysis to prove~\pref{thm:main} yields that for any $\tau \geq 2\misspec$:
\begin{align*}
\Etrain\brk*{\ind\{ |\fdro(x) - \fstar(x)|\geq \tau+\misspec \}\cdot \crl*{(\fdro(x) - \fstar(x))^2 - (\fbar(x) - \fstar(x))^2 } } \leq c \epsslow.
\end{align*}
Taking $\tau = 2\misspec$ and following the derivation used to prove~\pref{cor:dbr_cs}, we get
\begin{align*}
\Rtest(\fdro) \leq 10 \misspec^2 + c\epsslow
\end{align*}
(Note that this requires the non-negativity property provided by~\pref{lem:nonnegative}, which we still have.)
\end{proof}

The next result considers adapting to an unknown misspecification level.
\begin{proposition}[Adapting to $\misspec$]\label{prop:adapting-misspeci}
Let $\delta \in (0,1)$ and define $S := \{ 2^i: \tau_{\min} \leq 2^i \leq \tau_{\max}\}$ where $\tau_{\min} := \sqrt{ \frac{160 \log (|\cF| |S|/\delta)}{3n}}$ and $\tau_{\max} := 1$. 
Let $\tau^\star := \min \{ \tau \in S: \tau \geq 3\misspec\}$. 
Then there is an algorithm that, without knowledge of $\misspec$ and with probability at least $1-\delta$, computes $\hat{f}$ satisfying
\begin{align*}
\Etrain\brk*{\ind\{ |\fhat(x) - \fstar(x)|\geq \tau^\star+\misspec \}\cdot \crl*{(\fhat(x) - \fstar(x))^2 - \misspec^2 } } \leq \frac{160 \log (2|\cF||S|/\delta)}{3n}.
\end{align*}
\end{proposition}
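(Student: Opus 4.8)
The plan is to run \dbr at every scale in the geometric grid $S$ and then pick a scale by a Lepski-type rule, using only quantities computable from the data. Write $f_\tau$ for the \dbr minimizer in \eqref{eq:main_alg} computed with threshold $\tau$. The first step is a union bound: reapplying \pref{lem:concentration} simultaneously over all $\tau \in S$ (the non-negativity statement of \pref{lem:nonnegative} is a population identity and costs nothing) inflates the statistical rate only to $\epsstat = \tfrac{80 \log(|\cF||S|/\delta)}{3n}$, which is precisely the dependence appearing in the claimed bound. On the resulting event, which holds with probability at least $1-\delta$, the Step-3 chain of \pref{sec:proof_body} goes through verbatim for every \emph{valid} scale $\tau \geq 3\misspec$, so each such $f_\tau$ satisfies the conclusion of \pref{thm:main} at level $\tau + \misspec$, and in particular the quantile bound \eqref{eq:pr_risk}. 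The target is $\tau^\star$, the smallest valid grid point (assuming $3\misspec \leq \tau_{\max}$, which can always be arranged); the entire difficulty is to locate it without knowing $\misspec$.

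For the selection, I would introduce a disagreement-based consistency statistic between two candidates that references neither $\fstar$ nor $\misspec$ — for instance a comparison of $f_\tau$ against $f_{\tau'}$ in the region where they disagree by at least $\tau'$ — and accept a scale $\tau$ if this statistic stays of order $\epsstat$ for every larger $\tau' \in S$. Let $\hat\tau$ be the smallest accepted scale and output $\widehat f := f_{\hat\tau}$. Two facts must be proved on the good event. First, $\tau^\star$ is accepted, so $\hat\tau \leq \tau^\star$: by \eqref{eq:pr_risk} every valid candidate $f_{\tau'}$ with $\tau' \geq \tau^\star$ agrees with $\fstar$ up to $\tau' + \misspec$ off a set of tiny $\Ptrain$-mass, hence any two valid candidates are mutually close and the statistic between them is small. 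Second, the accepted candidate inherits a disagreement-risk bound: since $f_{\hat\tau}$ is certified consistent with the valid candidate $f_{\tau^\star}$, and $f_{\tau^\star}$ obeys \pref{thm:main}, transferring the bound across this comparison controls $f_{\hat\tau}$ at level roughly $\tau^\star + \misspec$.

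To assemble the final statement I would use the monotonicity already exploited in the \pfref{cor:dbr_cs}. Because $\hat\tau \leq \tau^\star$, the region $\crl{\,|\widehat f - \fstar| \geq \tau^\star + \misspec\,}$ is contained in $\crl{\,|\widehat f - \fstar| \geq \hat\tau + \misspec\,}$, and on the intervening annulus the integrand satisfies $(\widehat f(x) - \fstar(x))^2 - \misspec^2 \geq (\hat\tau + \misspec)^2 - \misspec^2 \geq 0$; hence the (inherited) bound at level $\hat\tau + \misspec$ implies the claimed bound at level $\tau^\star + \misspec$, with right-hand side $\tfrac{160 \log(2|\cF||S|/\delta)}{3n} = 2\epsstat$ after the union bound over $S$.

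The hard part is the design and analysis of the consistency statistic. It must be (a) computable from the data and the candidates alone, (b) provably small for pairs of valid candidates yet large enough to reject any strictly-too-small scale whose candidate is misspecification-amplified, and, most delicately, (c) concentrated at rate $\epsstat$ \emph{without} a union bound over all pairs $(f,g) \in \cF \times \cF$, which would replace $\log(|\cF||S|/\delta)$ by $\log(|\cF|^2|S|/\delta)$ and break the stated constant. I expect this to be the main obstacle: one must reuse the self-bounding variance estimate behind \pref{lem:concentration} (that $\Var[\cLhat(f;g)] \lesssim \tfrac1n \cL(f;g)$) so that the test's concentration follows from the same single event, and must verify that the Lepski inheritance introduces no slack beyond the geometric rounding already absorbed into $\tau^\star$.
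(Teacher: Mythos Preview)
Your Lepski-style plan is genuinely different from the paper's argument, and the obstacles you flag (designing a consistency statistic, avoiding a $|\cF|^2$ union bound, and the inheritance step) are exactly the places where it is not clear your route closes. The paper sidesteps all three by replacing the single minimizer $f_\tau$ with a \emph{version space}
\[
F_\tau := \Big\{ f \in \cF : \max_{g \in \cF} \cLhat(f;g) \leq \epsstat/2 \Big\}
\]
at each scale, then choosing $\hat\tau$ as the smallest $\tau \in S$ for which $\bigcap_{\tau' \in S,\,\tau' \geq \tau} F_{\tau'} \neq \emptyset$, and outputting any $\hat f$ in that intersection. The same concentration event you invoke (\pref{lem:concentration} union-bounded over $S$) gives two facts for every valid $\tau \geq \tau^\star$: (i) $\fbar \in F_\tau$, and (ii) every $f \in F_\tau$ satisfies $\cL^\tau(f;\fbar) \leq \epsstat$. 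Fact (i) says $\fbar$ witnesses that the intersection over $\{\tau' \geq \tau^\star\}$ is nonempty, hence $\hat\tau \leq \tau^\star$; and then $\hat f$ lies in $F_{\tau^\star}$ by construction, so fact (ii) gives the claimed bound \emph{directly} at level $\tau^\star$ with no inheritance or pairwise comparison needed.

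The specific weakness in your plan is the inheritance step. At an invalid scale $\hat\tau < 3\misspec$, \pref{lem:concentration} gives you nothing about $f_{\hat\tau}$, so the only handle is its closeness to $f_{\tau^\star}$; but a triangle-inequality transfer $|f_{\hat\tau} - \fstar| \leq |f_{\hat\tau} - f_{\tau^\star}| + |f_{\tau^\star} - \fstar|$ naturally produces a guarantee at level roughly $2\tau^\star + \misspec$ rather than $\tau^\star + \misspec$, which does not recover the stated inequality with the stated constant. The version-space idea avoids this because the output is not the minimizer at the selected (possibly invalid) scale, but rather a function that is \emph{simultaneously} a near-minimizer at the valid scale $\tau^\star$. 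This is the missing idea: use set-valued outputs and intersection non-emptiness (witnessed by $\fbar$) instead of comparing point estimates across scales.
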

Note that when $\misspec \ll \tau_{\min}$, we are essentially in the realizable regime. Thus, via the proof of~\pref{cor:dbr_realizable} the above guarantee with $\tau^\star := \tau_{\min}$ suffices.
On the other hand if $\misspec \geq 1/3$ then $\tau^\star$ is undefined, but due to~\pref{assum:boundedness} the guarantee in~\pref{thm:main} is vacuous. 
Thus, the above theorem recovers essentially the same result as~\pref{thm:main}, but without knowledge of $\misspec$. 
\begin{proof}[Proof sketch]
The algorithm is as follows. 
We run a slight variation of disagreement based regression for each $\tau \in S$: Instead of computing the minimizer of the objective in~\pref{eq:main_alg} we form the version space of near-minimizers.
Specifically, define
\begin{align*}
\forall \tau \in S:~~  F_\tau := \crl*{ f \in \cF: \max_{g \in \cF} \frac{1}{n}\sum_{i=1}^n W_{f,g}^\tau(x_i) \crl*{(f(x_i) -y_i)^2 - (g(x_i)-y_i)^2} \leq \epsstat/2 },
\end{align*}
where we define $\epsstat = \frac{80 \log(|\cF||S|/\delta)}{3n}$. 
Note this is slightly inflated from the definition in the statement of~\pref{lem:concentration}, which accounts for a union bound over all $|S|$ runs of the algorithm. 
Next, we define
\begin{align*}
\hat{\tau} := \argmin\crl*{ \tau \in S: \bigcap_{\tau' \in S: \tau' \geq \tau} F_{\tau'} \neq \emptyset},
\end{align*}
and return any function in this intersection, i.e., let $\hat{f}$ be any function in $\bigcap_{\tau' \in S: \tau' \geq \hat{\tau}}  F_{\tau'}$. 

For the analysis, via the analysis of~\pref{thm:main} and a union bound over the $|S|$ choices for $\tau$, we have
\begin{align*}
\forall \tau \geq \tau^\star: ~~~ \fbar \in F_{\tau} \quad \textrm{ and } \quad f \in F_\tau \Rightarrow \cL^\tau(f;\fbar) \leq \epsstat,
\end{align*}
where $\cL^\tau(f;g)$ is the population objective with parameter $\tau$. 
The first statement directly implies that $\hat{\tau} \leq \tau^\star$. 
This in turn implies that $\hat{f} \in F_{\tau^\star}$ and so $\hat{f}$ achieves the same statistical guarantee as if we ran \dbr with parameter $\tau^\star$ (up to the additional union bound). 
\end{proof}

\section{Proofs for~\pref{sec:rl}}
\label{app:rl}
\subsection{Offline RL}

\offlinerl*

\begin{proof}[\pfref{thm:offline_rl}]
For each ``target'' function $\ftrg \in \cF$ such that $\ftrg \ne \fbar$, let us define $\apx[\ftrg] \in \cF$ to be \emph{any} approximation to the Bellman backup $\cT \ftrg$ s.t. $\| \apx[\ftrg] - \cT \ftrg \|_{\infty} \leq \misspec$. Define $\apx[\fbar] = \fbar$, which also satisfies $\|\apx[\fbar] - \cT\fbar\|_{\infty} \leq \misspec$ by assumption.
Let us define the empirical and population losses for the disagreement-based regression problem with regression targets derived from $\ftrg$.
\begin{align*}
\textrm{(Empirical)}~~\cLhat_{\ftrg}(f;g) &:= \frac{1}{n}\sum_{i=1}^n W_{f,g}^\tau(s_i,a_i)\crl*{ (f(s_i,a_i) - y_{\ftrg,i})^2 - (g(s_i,a_i) - y_{\ftrg,i})^2},\\
\textrm{(Population)}~~\cL_{\ftrg}(f;g) &:= \bbE_{\mu}\brk*{W_{f,g}^\tau(s,a)\crl*{ (f(s,a) - y_{\ftrg})^2 - (g(s,a) - y_{\ftrg})^2} }.
\end{align*}
Here recall that $y_{\ftrg} := r + \max_{a'} \ftrg(s',a')$ is derived from the sample $(s,a,r,s')$. 
Also note that we use $\bbE_{\mu}\brk*{\cdot}$ to denote expectation with respect to the data collection policy. 

First, we apply~\pref{lem:nonnegative} and~\pref{lem:concentration} to each of the $|\cF|$ regression problems. 
By approximate completeness and the definition of $\apx[\ftrg]$ this yields
\begin{align}
\forall \ftrg,f \in \cF: ~~ 0 \leq \cL_{\ftrg}(f; \apx[\ftrg]) \leq 2\cLhat_{\ftrg}(f;\apx[\ftrg]) + \epsstat, \label{eq:offline_rl_conc}
\end{align}
where $\epsstat := \frac{160\log(|\cF|/\delta)}{3n}$. The above uniform bound holds with probability $1-\delta$. 
Note that this $\epsstat$ is twice as large as the one in the proof of~\pref{thm:main}, which accounts for the additional union bound over all $|\cF|$ regression problems. 

The main statistical guarantee for $\fhat$ is derived as follows
\begin{align*}
\cL_{\fhat}(\fhat;\apx[\fhat]) & \leqlab{\mathrm{(i)}} 2\cLhat_{\fhat}(\fhat; \apx[\fhat]) + \epsstat \leqlab{\mathrm{(ii)}} 2 \max_{g \in \cF} \cLhat_{\fhat}(\fhat; g) + \epsstat \leqlab{\mathrm{(iii)}} 2 \max_{g \in \cF} \cLhat_{\fbar}(\fbar; g) + \epsstat \leqlab{\mathrm{(iv)}} 2\epsstat.
\end{align*}
Here $\mathrm{(i)}$ is the second inequality in~\pref{eq:offline_rl_conc}, $\mathrm{(ii)}$ follows since $\apx[\hat{f}] \in \cF$, $\mathrm{(iii)}$ uses the optimality property of $\hat{f}$, and $\mathrm{(iv)}$ uses~\pref{eq:offline_rl_conc} again, noting the symmetry of $\cL_{\fbar}(\cdot;\cdot)$ and using $\apx[\fbar]=\fbar$.

Since the Bayes regression function defined by targets $y_{\hat{f}}$ is $\cT \hat{f}$, this yields
\begin{align}
\bbE_{\mu}\brk*{\ind\crl*{ \abs{ \hat{f}(s,a) - \apx[\fhat](s,a)} \geq 3\misspec  } \cdot \crl*{ (\hat{f}(s,a) - [\cT \hat{f}](s,a))^2 - (\apx[\hat{f}](s,a) - [\cT\hat{f}](s,a))^2 } } \leq 2\epsstat. \label{eq:offline_rl_excess_risk}
\end{align}

We translate this to the squared Bellman error on any other distribution $\nu \in \Delta(\cX\times\cA)$ via a slightly stronger argument than the one used to prove~\pref{cor:dbr_cs}.
\begin{align*}
& \bbE_{\nu}\brk*{ \abs*{ \fhat(s,a) - [\cT\fhat](s,a) }} \leq \misspec + \bbE_{\nu}\brk*{ \abs*{ \fhat(s,a) - \apx[\fhat](s,a) }}\\
& ~~~~ \leq 4\misspec + \bbE_{\nu}\brk*{ \ind\crl*{\abs{ \hat{f}(s,a) - \apx[\fhat](s,a)} \geq 3\misspec }\cdot\abs*{ \fhat(s,a) - \apx[\fhat](s,a) }}\\
& ~~~~ \leq 4\misspec + \sqrt{ \bbE_{\mu}\brk*{ \prn*{ \frac{\nu(s,a)}{\mu(s,a)}}^2 }} \cdot \sqrt{\bbE_{\mu}\brk*{ \ind\crl*{\abs{ \hat{f}(s,a) - \apx[\fhat](s,a)} \geq 3\misspec }\cdot\prn*{ \fhat(s,a) - \apx[\fhat](s,a) }^2}}\\
& ~~~~ = 4\misspec + \| \nu/\mu \|_{L_2(\mu)} \cdot \sqrt{\bbE_{\mu}\brk*{ \ind\crl*{\abs{ \hat{f}(s,a) - \apx[\fhat](s,a)} \geq 3\misspec }\cdot\prn*{ \fhat(s,a) - \apx[\fhat](s,a) }^2}}\\
& ~~~~ \leq 4\misspec + \nrm*{ \nu/\mu }_{L_2(\mu)} \cdot  \sqrt{ 6 \epsstat}.
\end{align*}
The last inequality is based on the ``self-bounding'' argument we used to control the variance in the proof of~\pref{lem:concentration}, which showed that under the event $\abs{\hat{f}(s,a) - \apx[\fhat](s,a)} \geq 3\misspec$:
\begin{align*}
\prn*{ \fhat(s,a) - \apx[\fhat](s,a) }^2 \leq 3\cdot \crl*{ \prn*{ \fhat(s,a) - [\cT \fhat](s,a) }^2 - \prn*{ \apx[\fhat](s,a) -[\cT\fhat](s,a)}^2 }.
\end{align*}
Note that $\nrm*{\nu/\mu}_{L_2(\mu)}^2 \leq  \nrm*{ \nu/\mu}_{\infty}$ since $\bbE_{\mu}[ \nu(s,a)/\mu(s,a)] = \bbE_{\nu}[1] = 1$.

Finally, we appeal to the telescoping performance difference lemma~\citep[c.f.,][Theorem 2]{xie2020q}, which states that for an action-value function $f$, 
\begin{align*}
J(\pi^\star) - J(\pi_f) \leq \frac{\bbE_{d^{\pi^\star}}[ [\cT f](s,a) - f(s,a)]}{1-\gamma} + \frac{\bbE_{d^{\pi_f}}[ f(s,a) - [\cT f](s,a)] }{1-\gamma},
\end{align*}
where $d^\pi := (1-\gamma) \sum_{h=0}^{\infty} \gamma^h d_h^\pi$.  Both terms are controlled by the distribution shift argument above and the concentrability coefficient, yielding the theorem. 
\end{proof}

\subsection{Online RL}

\onlinerl*

\begin{proof}[\pfref{thm:online_rl}]
The proof makes essentially two modifications to the proof of Theorem 1 of~\citet{xie2022role}. 
The first step is a concentration argument, which is essentially a martingale version of~\pref{thm:main}.
The second is the distribution shift argument, which is very similar to the one we used to prove~\pref{thm:offline_rl}.
To keep the presentation concise, we focus on these arguments, and explain how they fit into the analysis of~\citet{xie2022role}, but we do not provide a self-contained proof. 

\paragraph{Notation}
We adopt the following notation. 
Recall that $\cF\index{t-1}$ is the version space used in episode $t$ and that $f\index{t} \in \cF\index{t-1}$ induces the policy $\pit$ deployed in the episode. 
As before, let $\apx[f_{h+1}] \in \cF_h$ denote the $L_{\infty}$-approximation to $\cT_h f_{h+1}$. 
For each episode $t$ let
\begin{align*}
\delta_h\index{t}(\cdot) &:= f_h\index{t}(\cdot) - [\cT_h f_{h+1}\index{t}](\cdot) \qquad \textrm{and}\\
\err\index{t}(\cdot) &:= \ind\crl*{ \abs*{f_h\index{t}(\cdot) - \apx[f_{h+1}\index{t}](\cdot)} \geq 3\misspec}\cdot\crl*{\prn*{ f_h\index{t}(\cdot) - [\cT_h f_{h+1}\index{t}](\cdot)}^2 -  \prn*{ \apx[f_{h+1}\index{t}](\cdot) - [\cT_h f_{h+1}\index{t}](\cdot)}^2}.
\end{align*}
Let $d_h\index{t} = d_h^{\pi\index{t}}$ and define $\widetilde{d}_h\index{t}(x,a) = \sum_{i=1}^{t-1}d_h\index{i}(x,a)$ and $\mu_h^\star$ to be the distribution that achieves the value $\Ccov$ for layer $h$.

\paragraph{Concentration}
By a martingale version of~\pref{thm:main}, we can show that with probability at least $1-\delta$, for all $t \in [T]$:
\begin{align}
(i)~~ \fbar \in \cF\index{t}, \quad \mathrm{ and } \quad (ii)~~\forall h \in [H]:~~ \sum_{s,a} \widetilde{d}_h\index{t}(s,a) \err_{h}\index{t}(s,a) \leq O(\beta), \label{eq:online_rl_conc}
\end{align}
where $\beta = c\log(TH|\cF|/\delta)$. 
We do not provide a complete proof of this statement, noting that it is essentially the same guarantee as in~\pref{eq:offline_rl_excess_risk}, except that (a) it is a non-stationary version with a union bound over each time step $h$ and episode $t$ and (b) it uses martingale concentration (i.e., Freedman's inequality instead of Bernstein's inequality). 
It is also worth comparing with the concentration guarantee of~\citep{xie2022role} under exact realizability/completeness, which is that $Q^\star \in \cF\index{t}$ and that $\sum_{s,a} \widetilde{d}_h\index{t}(s,a)(\delta_h\index{t}(s,a))^2 \leq O(\beta)$. 

\paragraph{Distribution shift}
To bound the regret, note that
\begin{align*}
\reg \leq \sum_{t=1}^T \sum_{h=1}^H \bbE_{(s,a) \sim d_h\index{t}}\brk*{\delta_h\index{t}(s,a) }.
\end{align*}

For distribution shift, we must translate the above on-policy Bellman errors to the ``\dbr'' errors on the historical data $\widetilde{d}_h\index{t}$, which is controlled by~\pref{eq:online_rl_conc}. 
Following~\citep{xie2022role} we consider \emph{burn-in} and \emph{stable} phases. Let

\begin{align*}
\gamma_h(s,a) := \min\crl*{t : \widetilde{d}_h\index{t}(s,a) \geq \Ccov\cdot \mu_h^\star(s,a) }, 
\end{align*}
and decompose
\begin{align*}
\sum_{t=1}^T \bbE_{(s,a) \sim d_h\index{t}}\brk*{\delta_h\index{t}(s,a) } = \sum_{t=1}^T \bbE_{(s,a) \sim d_h\index{t}}\brk*{\delta_h\index{t}(s,a) \ind\crl*{t < \gamma_h(s,a)}} + \bbE_{(s,a) \sim d_h\index{t}}\brk*{\delta_h\index{t}(s,a) \ind\crl*{t \geq \gamma_h(s,a)}}.
\end{align*}
The first term is the regret incurred during the burn-in phase, which is bounded by $2\Ccov$ following exactly the argument of~\citet{xie2022role}. 
This contributes a total regret of $2H\Ccov$. 

The second term is the regret incurred during the stable-phase, for which we must perform a distribution shift argument. 
To condense the notation, define
\begin{align*}
\bar{\delta}_h\index{t}(\cdot) := \apx[f_{h+1}\index{t}](\cdot) - [\cT_h f_{h+1}\index{t}](\cdot), \quad \mathrm{ and } \quad \tilde{\delta}_h\index{t}(\cdot) := f_h\index{t}(\cdot) - \apx[f_{h+1}\index{t}](\cdot).
\end{align*}
Note that, by assumption, $\abs*{\bar{\delta}_h\index{t}(s,a)} \leq \misspec$. 
Then,
\begin{align*}
& \sum_{t=1}^T \bbE_{d_h\index{t}}\brk*{\delta_h\index{t}(s,a) \ind\crl*{t > \gamma_h(s,a)}}\\
& ~~~~ = \sum_{t=1}^T \bbE_{d_h\index{t}}\brk*{\prn*{\tilde{\delta}_h\index{t}(s,a) + \bar{\delta}_{h}\index{t}(s,a) }\ind\crl*{t > \gamma_h(s,a)}}\\
& ~~~~ \leq \sum_{t=1}^T \bbE_{d_h\index{t}}\brk*{\tilde{\delta}_h\index{t}(s,a) \ind\crl*{t > \gamma_h(s,a)}} + T\misspec\\
& ~~~~ \leq \sum_{t=1}^T\bbE_{d_h\index{t}}\brk*{\ind\crl*{|\tilde{\delta}_h\index{t}(s,a)| \geq 3\misspec}\tilde{\delta}_h\index{t}(s,a) \ind\crl*{t > \gamma_h(s,a)}} + 4T\misspec\\
& ~~~~ \leq \sqrt{\sum_{t=1}^T \sum_{s,a} \frac{\prn*{\ind\crl*{t > \gamma_h(s,a)}d_h\index{t}(s,a) }^2}{\widetilde{d}_h\index{t}(s,a)}}\cdot\sqrt{ \sum_{t=1}^T \sum_{x,a} \widetilde{d}_h\index{t}(x,a) \ind\crl*{|\tilde{\delta}_h\index{t}(s,a)| \geq 3\misspec}(\tilde{\delta}_h\index{t}(s,a))^2} + 4T\misspec\\
& ~~~~ \leq \sqrt{\sum_{t=1}^T \sum_{s,a} \frac{\prn*{\ind\crl*{t > \gamma_h(s,a)}d_h\index{t}(s,a) }^2}{\widetilde{d}_h\index{t}(s,a)}}\cdot\sqrt{ 3 \sum_{t=1}^T \sum_{x,a} \widetilde{d}_h\index{t}(x,a) \err_h\index{t}(s,a)} + 4T\misspec.
\end{align*}
The penultimate inequality is Cauchy-Schwarz and the final inequality follows from the self-bounding property that we used in the proof of~\pref{lem:concentration} and~\pref{thm:offline_rl}. 
In particular under the event that $\abs*{ \tilde{\delta}_{h}\index{t}(s,a) }\geq 3\misspec$, we can bound $(\tilde{\delta}_h\index{t}(s,a))^2 \leq 3\prn*{(\delta_h\index{t}(s,a))^2 - (\bar{\delta}_h\index{t}(s,a))^2}$. 
Thus we have converted from the on-policy Bellman error to the historical ``\dbr'' errors, i.e., we can further bound by
\begin{align*}
\leq \sqrt{\sum_{t=1}^T \sum_{s,a} \frac{\prn*{\ind\crl*{t > \gamma_h(s,a)}d_h\index{t}(s,a) }^2}{\widetilde{d}_h\index{t}(s,a)}}\cdot O\prn*{\sqrt{ \beta T} } + 4T\misspec.
\end{align*}
Meanwhile the density ratio term is bounded by $O(\sqrt{ \Ccov\log(T) })$ via the analysis of~\citet{xie2022role}. 
Repeating this analysis for each time step $h$ proves the theorem. 
\end{proof}

\end{document}